\renewcommand{\cite}[1]{\citep{#1}} % by default parenthesis
\numberwithin{table}{section}
\numberwithin{figure}{section}
\numberwithin{equation}{section}
\newcommand{\refitem}[1]{(\ref{#1})}
\theoremstyle{definition}
\newtheorem{theorem}{Theorem}[section]
\newtheorem{example}[theorem]{Example}
\newtheorem{definition}[theorem]{Definition}
\newtheorem{proposition}[theorem]{Proposition}
\newtheorem{remark}[theorem]{Remark}
\newcommand{\fname}[1]{\mathit{#1}} % function name
\newcommand{\romI}{\emph{(i)}} % roman numeral 1
\newcommand{\romII}{\emph{(ii)}} % roman numeral 2
\newcommand{\romIII}{\emph{(iii)}} % roman numeral 3
\renewcommand{\qed}{\hfill$\square$} % right align QED
\newcommand{\cs}{,\,} % comma separate in math
\newcommand{\proofpara}[1]{\bigskip\noindent\textbf{#1.} }
\newcommand{\proofsubpara}[1]{\medskip\quad\textbf{#1.} }
\newcommand{\java}[1]{\texttt{#1}}
\newcommand{\nat}{\mathbb{N}}
\newcommand{\natzero}{\mathbb{N}_0} % without zero
\newcommand{\real}{\mathbb{R}}
\newcommand{\set}[1]{\{#1\}}
\newcommand{\ssize}[1]{\left|#1\right|}
\newcommand{\actions}{A}
\newcommand{\act}{a}
\newcommand{\actB}{b}
\newcommand{\stateset}{V} % set of states
\newcommand{\rl}[1]{L_{#1}} % reducibility iteration/layer
\newcommand{\reduceX}[2]{\fname{reduce}(#1,#2)}
\newcommand{\reduce}{\fname{reduce}(\task)}
\newcommand{\task}{T}
\newcommand{\tasktup}{(\states,\initstates,\actions,\rewards,\tr)}
\newcommand{\states}{Q}
\newcommand{\initstates}{\states_0}
\newcommand{\rewards}{\mathit{rewards}}
\newcommand{\tr}{\delta}
\newcommand{\st}{q}
\newcommand{\stB}{r}
\newcommand{\goals}[1]{\fname{goals}(#1)}
\newcommand{\learnableF}{learnable under fairness}
\newcommand{\learnable}{learnable}
\newcommand{\actX}[1]{\act_{#1}}
\newcommand{\cnf}{c}
\newcommand{\cnfX}[1]{\cnf_{#1}}
\newcommand{\stX}[1]{\st_{#1}}
\newcommand{\wmX}[1]{w_{#1}}
\newcommand{\polX}[1]{\pi_{#1}}
\newcommand{\wm}{\wmX{}}
\newcommand{\pol}{\polX{}}
\newcommand{\cnftupX}[1]{(\stX{#1},\polX{#1},\wmX{#1})}
\newcommand{\cnftup}{\cnftupX{}}
\newcommand{\trans}{t}
\newcommand{\opt}[1]{\fname{opt}(#1)}
\newcommand{\apply}[3]{\fname{apply}(#1,#2,#3)}
\newcommand{\jump}[1]{{}\xrightarrow{#1}}
\newcommand{\run}{\mathcal{R}}
\newcommand{\trial}{\chain}
\newcommand{\chain}{\mathcal{C}}
\newcommand{\ground}{\mathit{ground}(\pol)}
\newcommand{\forward}{\fname{forward}(\pol)}
\newcommand{\backward}{\fname{backward}(\pol)}
\newcommand{\fl}[1]{F_{#1}} % forward layer
\newcommand{\bl}[1]{B_{#1}} % backward layer
\newcommand{\unstable}{\fname{unstable}(\task)}
\newcommand{\border}{\fname{border}(\task)}
\newcommand{\offsets}[1]{\fname{offset}(#1)}
\newcommand{\move}[2]{\fname{move}(#1, #2)}
\newcommand{\gridgoals}{\mathit{Goals}}
\newcommand{\gridactions}{\actions_{\mathrm{grid}}}
\newcommand{\order}[1]{\fname{order}(#1)}
\newcommand{\len}[1]{|#1|}
\newcommand{\elemat}[2]{#1[#2]}
\newcommand{\qnt}[2]{Q(#1,#2)}
\newcommand{\floor}[1]{\left\lfloor #1 \right\rfloor}
\begin{document}

\title{On the convergence of cycle detection for navigational reinforcement learning
}

\author{%
    Tom~J.~Ameloot%
        \thanks{T.J.~Ameloot is a Postdoctoral Fellow of the Research Foundation -- Flanders (FWO).        
        }~   
    \and
    Jan~Van~den~Bussche
}

\date{}

\maketitle{}

\begin{abstract}    
    We consider a reinforcement learning framework where agents have to navigate from start states to goal states.
    We prove convergence of a cycle-detection learning algorithm on a class of tasks that we call reducible.
    Reducible tasks have an acyclic solution.
    We also syntactically characterize the form of the final policy. This characterization can be used to precisely detect the convergence point in a simulation.
    Our result demonstrates that even simple algorithms can be successful in learning a large class of nontrivial tasks.
    In addition, our framework is elementary in the sense that we only use basic concepts to formally prove convergence.
    %\keywords{reinforcement learning \and graph \and navigation \and convergence}
\end{abstract}

%\setcounter{tocdepth}{2}
%\tableofcontents

%========================================
\section{Introduction}

Reinforcement Learning (RL) is the subfield of Artificial Intelligence concerned with agents that have to learn a task-solving policy by exploring state-action pairs and observing rewards \cite{sutton-barto_1998}. Off-policy algorithms such as Q-learning, or on-policy algorithms such as Sarsa, are well-understood and can be shown to converge towards optimal policies under quite general assumptions. These algorithms do so by updating, for every state-action pair $(s,a)$, an estimate $Q(s,a)$ of the expected value of doing $a$ in $s$.

Our aim in this article is expressly \emph{not} to propose a more efficient or more powerful new RL algorithm. In contrast, we want to show that convergence can occur already with very simplistic algorithms. The setting of our result is that of tasks where the agent has to reach a goal state in which a reward action can be performed. Actions can be nondeterministic. We like to refer to this setting as \emph{navigational learning}.

The learning algorithm we consider is for a simplistic agent that can only remember the states it has already visited. The algorithm is on-policy; its only update rule is that, when a state is revisited, the policy is revised and updated with an arbitrary new action for that state. 
We refer to this algorithm as the \emph{cycle-detection algorithm}.
Our main result is that this algorithm converges for all tasks that we call \emph{reducible}. Intuitively, a task is reducible if there exists a policy that is guaranteed to lead to reward. 
We also provide a test for convergence that an outside observer could apply to decide when convergence has happened, which can be used to detect convergence in a simulation.
We note that the final policy is allowed to explore only a strict subset of the entire state space.

\medskip

A first motivation for this work is to understand how biological organisms can be successful in learning navigational tasks.
For example, animals can learn to navigate from their nest to foraging areas and back again~\cite{geva-sagiv_2015}. Reward could be related to finding food or returning home. As in standard RL, the learning process might initially exhibit exploration, after which eventually a policy is found that leads the animal more reliably to reward.
In the context of biologically plausible learning, \citet{fremaux_2013} make the following interesting observations.
First, navigational learning is not restricted to physical worlds, but can also be applied to more abstract state spaces. 
Second, the formed policy strongly depends on the experiences of the agent, and therefore the policy is not necessarily optimal.
We elaborate these observations in our formal framework.
We consider a general definition of tasks, which can be used to represent both physically-inspired tasks and more abstract tasks. 
Furthermore, we do not insist on finding (optimal) policies that generate the shortest path to reward, but we are satisfied with learning policies that avoid cycles.

A secondary motivation for this work is to contribute towards filling an apparent gap that exists between the field of Reinforcement Learning and the more logic-based fields of AI and computer science. 
Indeed, on the structural level, the notion of task as used in RL is very similar to the notion of \emph{interpretation} in description logics~\cite{dlhandbook_2010}, or the notion of \emph{transition system} used in verification~\cite{baier-katoen_2008}. Yet, the methods used in RL to establish convergence are largely based on techniques from numerical mathematics and the theory of optimization. 
Our aim was to give proofs of convergence that are more elementary and are more in the discrete-mathematics style common in the above logic-based fields, as well as in traditional correctness proofs of algorithms \cite{algorithms_2009}.

Standard RL convergence proofs assume the condition that state-action pairs are visited (and thus updated) infinitely often, see e.g.~\cite{watkins_1992}. Conditions of this kind are known as \emph{fairness} conditions in the theory of concurrent processes \cite{francez_1986}. Also for our convergence proof we need an appropriate fairness assumption to the effect that when the agent repeats some policy-updating configuration infinitely often, it must also explore all possible updates infinitely often.

We note that the cycle-detection learning algorithm could be remotely related to biologically plausible mechanisms. In some models of biological learning~\cite{potjans_2011,fremaux_2013}, a policy is represented by synaptic connections from neurons encoding (the perception of) states to neurons encoding actions. Connections are strengthened when pre-before-post synaptic activity is combined with reward~\cite{schultz_2013}, causing an organism to remember action preferences for encountered states. If an organism would initially have a policy that frequently leads to cycles in the task, there is a (slow) way to still unlearn that policy, as follows.%
    \footnote{In this discussion, we purposely do not mention mechanisms of disappointment, i.e., the opposite of reward, because the framework in this article does not contain such mechanisms.}
 Consider a pair $(\st,\act)$ of a state $\st$ and its preferred action $\act$ in the policy. Due to noise~\cite{maass_noise_2014}, a neuron $x$ participating in the encoding of action $\act$ could become activated just before state $\st$ effectively occurs.
Possibly, this post-before-pre synaptic activity leads to long-term-depression~\cite{gerstner_book_2014}, i.e., connections from $\st$ to $x$ are weakened.%
    \footnote{    
    If neuron $x$ is activated by noise just before state $\st$ occurs, refractoriness could  prevent state $\st$ from subsequently activating $x$~\cite{gerstner_book_2014}. 
    The resulting absence of a postsynaptic spike at $x$ fails to elicit long-term-potentiation, i.e., connections from $\st$ to $x$ are not strengthened. So, the mentioned weakening effect is not compensated.}
At some synapses, the weakening effect is aided by a longer time window for long-term-depression compared to long-term-potentiation~\cite{markram_2011}.
So, if reward would remain absent for longer periods, as in cycles without reward, noise could gradually unlearn action preferences for states. In absence of such preferences, noise could generate random actions for states. The unlearning phase followed by new random action proposals, would resemble our cycle-detection algorithm.

%-------------------------------
\paragraph{Outline}
This article is organized as follows. 
We discuss related work in Section~\ref{sec:relwork}.
We formalize important concepts in Section~\ref{sec:fundamentals}.
We present and prove our results in Section~\ref{sec:results}. 
We discuss examples and simulations in Section~\ref{sec:examples}, and we conclude in Section~\ref{sec:conclusion}.

%======================================
\section{Related Work}
\label{sec:relwork}

Some previous work on reinforcement learning algorithms is focused on learning a policy efficiently, say, using a polynomial number of steps in terms of certain input parameters of the task~\cite{kearns_2002,brafman_2002,strehl_2009}.
There is also a line of work in reinforcement learning that is not necessarily aimed towards efficiently bounding the learning time. In that case, convergence of the learning process happens in the limit, by visiting task states infinitely often.
Some notable examples are temporal-difference learning~\cite{sutton_1988,sutton-barto_1998} and Q-learning~\cite{watkins_1989,watkins_1992}. 
Temporal-difference learning has become an attractive foundation for biological learning models~\cite{potjans_2011,fremaux_2013,schultz_2013,schultz_2015}.
%Convergence proofs~\cite{dayan_1992,dayan_1994,jaakkola_1994,tsitsiklis_1994}.

Most previous works in numerical reinforcement learning try to find optimal policies, and their related optimal value functions~\cite{sutton_1988,watkins_1992,dayan_1992,dayan_1994,jaakkola_1994,tsitsiklis_1994}; an optimal policy gives the highest reward in the long run.
This has motivated the design of numerical learning techniques.
The corresponding proof techniques do not always clearly illuminate how properties of the task state space interplay with a particular learning algorithm. With the framework introduced in this article, we hope to shed more light on properties of the task state space, in particular on the way that paths could be formed in the graph structure of the task. 
Although our graph-oriented framework has a different viewpoint compared to standard numerical reinforcement learning, we believe that our Theorem~\ref{theo:algvisit}, showing that convergence always occurs on reducible tasks, and its proof contribute to making the fascinating idea of reinforcement learning more easily accessible to a wider audience.
Our convergence result has a similar intent as previous results showing that numerical learning algorithms converge with probability one.%~\cite{watkins_1992,dayan_1992,dayan_1994,tsitsiklis_1994}.

Various papers study models of reinforcement learning in the context of neuronal agents that learn to navigate a physical environment~\cite{vasilaki_2009,potjans_2011,fremaux_2013}.
Interestingly, \citet{fremaux_2013} study both physical and more abstract state spaces. 
As an example of a physical state space, they consider a navigation task in which a simulated mouse has to swim to a hidden platform where it can rest, where resting corresponds to reward; each state contains only the $x$ and $y$ coordinate.
As an example of an abstract state space, they consider an acrobatic swinging task where reward is given when the tip of a double pendulum reaches a certain height; this space is abstract because each state contains two angles and two angular velocities, i.e., there are four dimensions.
Conceptually it does not matter how many dimensions a state space has, because the agent is always just seeking paths in the graph structure of the task.

This idea of finding paths in the task state space is also explored by \citet{bonet_2006}, in a framework based on depth-first search. Their framework has a more global perspective where learning operations have access to multiple states simultaneously and where the overall search is strongly embedded in a recursive algorithm with backtracking.
Our algorithm acts from the local perspective of a single agent, where only one state can be observed at any time. 

As remarked by \citet[p.~104]{sutton-barto_1998}, a repeated theme in reinforcement learning is to update the policy (and value estimation) while the agent visits states. This theme is also strongly present in the current article, because for each visited state the policy always remembers the lastly tried action for that state. The final aim for convergence, as studied in this article, is to eventually not choose any new actions anymore for the encountered states.

\begin{comment}%%%%%
Perhaps in contrast to the notion of optimality often used in the previous work on numerical reinforcement learning~\cite{sutton-barto_1998},
where the aim is to find a policy that gives the highest expected reward, we are only studying a very local form of optimality, where convergence is formalized as the eventual avoidance of cycles in the paths through the state space. 
The paths need not be the shortest.%
    \footnote{We do not associate costs to the transitions in the task state space, so paths can only be compared in length.}
We believe that this viewpoint on convergence aligns well with animal and human learning, where an organism learns just some path to reward, heavily dependent on the experience~\cite{fremaux_2013}.
Perhaps animals do not detect cycles explicitly, but the detection of cycles might nonetheless be related to a mechanism that detects disappointment or boredom when the animal is taking too long to reach reward. In particular, disappointment could be related to the expectation of reward, and can be represented by the suppression of dopamine~\cite{schultz_2013}.
\end{comment}%%%%%

The notion of reducibility discussed in this article is related to the principles of (numerical) dynamic programming, upon which a large part of reinforcement learning literature is based~\cite{sutton-barto_1998}. Indeed, in reducibility, we defer the responsibility of obtaining reward from a given state to one of the successor states under a chosen action. This resembles the way in dynamic programming that reward prediction values for a given state can be estimated by looking at the reward prediction values of the successor states.
In settings of standard numerical reinforcement learning, dynamic programming finds an optimal policy in a time that is worst-case polynomial in the number of states and actions. This time complexity is also applicable to our iterative reducibility procedure given in Section~\ref{sub:task}.

%======================================
\section{Navigational Reinforcement Learning}
\label{sec:fundamentals}

We formalize tasks and the notion of reducibility in Section~\ref{sub:task}. 
Next, in Section~\ref{sub:navlearn}, we use an operational semantics to formalize the interaction between a task and our cycle-detection learning algorithm.
In Section~\ref{sub:convergence}, we define convergence as the eventual stability of the policy.
Lastly, in Section~\ref{sub:fairness}, we impose certain fairness restrictions on the operational semantics.

%---------------------------------------
\subsection{Tasks and Reducibility}
\label{sub:task}
\paragraph{Tasks}

To formalize tasks, we use nondeterministic transition systems where some transitions are labeled as being immediately rewarding, where reward is only an on-off flag.
Formally, a \emph{task} is a five-tuple 
\[
    \task=\tasktup
\]
where 
    $\states$, $\initstates$, and $\actions$ are nonempty finite sets;
    $\initstates\subseteq\states$;
    $\rewards$ is a nonempty subset of $\states\times\actions$; and,
    $\tr$ is a function that maps each $(\st,\act)\in\states\times\actions$ to a nonempty subset of $\states$.
    The elements of $\states$, $\initstates$, and $\actions$ are called respectively \emph{states}, \emph{start states}, and \emph{actions}.
    The set $\rewards$ tells us which pairs of states and actions give immediate reward.
    Function $\tr$ describes the possible successor states of applying actions to states.
    
\begin{remark}
    Our formalization of tasks keeps only the graph structure of models previously studied in reinforcement learning; essentially, compared to finite Markov decision processes~\cite{sutton-barto_1998}, we omit transition probabilities and we simplify the numerical reward signals to boolean flags.
    We do not yet study negative feedback signals, so performed actions give either reward or no reward, i.e., the feedback is either positive or neutral. 
    %
    %In this article we also do not yet consider the topic of generalization~\cite{sutton-barto_1998}, where an agent only sees concepts that are abstractions of states, and the agent learns to assign actions to concepts instead of directly to states; one concept could represent many states.
    In our framework, the agent can observe states in an exact manner, which is a commonly used assumption~\cite{sutton-barto_1998,kearns_2002,brafman_2002,bonet_2006}.
    We mention negative feedback signals and partial information as topics for further work in Section~\ref{sec:conclusion}.
    \qed
\end{remark}

\paragraph{Reducibility}
Let $\task$ be a task as above.
We define the set
\[
    \goals\task=\set{\st\in\states\mid \exists\act\in\actions\text{ with }(\st,\act)\in\rewards}.
\]
We refer to the elements of $\goals\task$ as \emph{goal states}.
Intuitively, for a goal state there is an action that reliably gives immediate reward. 
Each task has at least one goal state because the set $\rewards$ is always nonempty.
The agent could learn a strategy to reduce all encountered states to goal states, and then perform a rewarding action at goal states. This intuition is formalized next.

Let $\stateset\subseteq\states$. We formalize how states can be reduced to $\stateset$.
Let $\natzero$ denote the set of natural numbers without zero.
First, we define the infinite sequence 
\[
    \rl 1, \rl 2, \ldots
\]
of sets where $\rl 1=\stateset$, and for each $i\geq 2$,
\[
    \rl i = \rl{i-1}\cup \set{\st\in\states \mid \exists\act\in\actions\text{ with }\tr(\st,\act)\subseteq\rl{i-1} }.
\]
We call $\rl 1$, $\rl 2$, etc, the \emph{(reducibility) layers}.
We define $\reduceX\task\stateset=\bigcup_{i\in\natzero}\rl i$.
Note that $\reduceX\task\stateset\subseteq\states$.
Because $\states$ is finite, there is an index $n\in\natzero$ for which $\rl n=\rl{n+1}$, i.e., $\rl n$ is a \emph{fixpoint}.
Letting $\stateset'\subseteq\states$, we say that $\stateset'$ is \emph{reducible to $\stateset$} if $\stateset'\subseteq\reduceX\task\stateset$.
Intuitively, each state in $\stateset'$ can choose an action to come closer to $\stateset$. 
We also say that a single state $\st\in\states$ is \emph{reducible to $\stateset$} if $\st\in\reduceX\task\stateset$.

Now, we say that task $\task$ is \emph{reducible (to reward)} if the state set $\states$ is reducible to $\goals\task$.
We use the abbreviation $\reduce=\reduceX\task\stateset$ where $\stateset=\goals\task$.
Reducibility formalizes a sense of solvability of tasks.

We illustrate the notion of reducibility with the following example.
\begin{example}
    \label{ex:reduce}
    We consider the task $\task=\tasktup$ defined as follows:
        $\states=\set{1, 2, 3}$;
        $\initstates=\set{1}$;
        $\actions=\set{a, b}$;
        $\rewards=\set{(3, a), (3, b)}$; and, regarding $\tr$, we define
        \begin{align*}
            \tr(1,a) &= \set{1, 3},\\
            \tr(1,b) &= \set{2},\\
            \tr(2,a) &= \set{1, 3},\\
            \tr(2,b) &= \set{3},\\
            \tr(3,a)=\tr(3,b)&=\set{3}.
        \end{align*}    
    Task $\task$ is visualized in Figure~\ref{fig:reduce}.
    Note that the task is reducible, by assigning the action $\actB$ to both state $1$ and state $2$. The reducibility layers up to and including the fixpoint, are:
    \begin{align*}
        \rl 1 &= \goals\task = \set{3},\\ 
        \rl 2 &= \set{3,2}, \\
        \rl 3 &= \set{3,2,1}.
    \end{align*}
        
    For simplicity, the assignments $1\mapsto b$ and $2\mapsto b$ form a deterministic strategy to reward. But we could easily extend task $\task$ to a task in which the strategy to reward is always subjected to nondeterminism, by adding a new state $4$ with the new mappings $\tr(1,b)=\set{2, 4}$, $\tr(4,a)=\tr(4,b)=\set{3}$.
    \qed
\end{example}

\begin{figure}
    \begin{center}
    \includegraphics[width=0.5\textwidth]{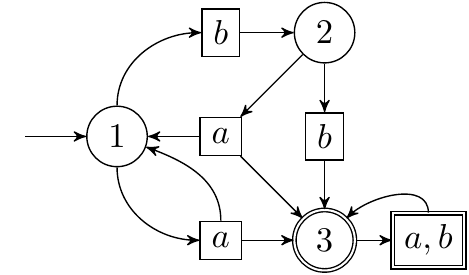}
    \end{center}
    \caption{The task from Example~\ref{ex:reduce}. %
    States and action applications are represented by circles and boxes respectively. Start states are indicated by an arrow without origin. Goal states and their rewarding actions are highlighted by a double circle and double box, respectively.}
    \label{fig:reduce}
\end{figure}

\begin{remark}
    Reducibility formalizes the intuition of an acyclic solution. This appears to be a natural notion of solvability, even in state graphs that contain cycles~\cite{bonet_2006}.
    
    We would like to emphasize that reducibility is a notion of progress in the task transition graph, but it is not the same as determinism because each action application, i.e., transition, remains inherently nondeterministic. 
    We may think of reducibility as onion layers in the state space: the core of the onion consists of the goal states, where immediate reward may be obtained, and, for states in outer layers there is an action that leads one step down to an inner layer, closer to reward. When traveling from an outer layer to an inner layer, the nondeterminism manifests itself as unpredictability on the exact state that is reached in the inner layer.
    \qed
\end{remark}

%- - - - - - - - - - - - - - - - - -
\subsection{Cycle-detection Algorithm}
\label{sub:navlearn}
We describe a cycle-detection learning algorithm that operates on tasks, by means of an operational semantics that describes the steps taken over time.
We first give the intuition behind the cycle-detection algorithm, and then we proceed with the formal semantics.

%----------------------------------------------
\subsubsection{Intuition}
We want to formally elaborate the intuition of path learning. Our aim therefore is not necessarily to design another efficient learning algorithm.
It seems informative to seek only the bare ingredients necessary for navigational learning.
How would such a simple algorithm look like?

As a first candidate, let us consider the algorithm that is given some random initial policy and that always follows the policy during execution.  
There would not be any exploration, and no learning, since the policy is always followed and never modified. In general, the policy might not even lead to any reward at all, and the agent might run around in cycles without obtaining reward.

At the opposite end of the spectrum, there could be a completely random process, that upon each visit to a task state always chooses some random action. 
If the agent is lucky then the random movement through the state space might occasionally, but unreliably, lead to reward.
There is no sign of learning here either, because there is no storage of previously gained knowledge about where reward can be obtained.

Now we consider the following in-between strategy: the algorithm could only choose random actions when it detects a cycle in the state space before reaching reward.
If the agent does not escape from the cycle then it might keep running around indefinitely without ever reaching reward.
More concretely, we could consider a cycle-detection algorithm, constituted by the following directives: 
\begin{itemize}
    \item Starting from a given start state, we continuously remember all encountered states. Each time when reward is obtained, we again forget about which states we have seen.
    \item Whenever we encounter a state $\st$ that we have already seen before, we perform some random action, and we store that action in the policy (for that state $\st$).
\end{itemize}
The cycle-detection algorithm is arguably amongst the simplest learning algorithms that one could conceive. 
The algorithm might be able to gradually refine the policy to avoid cycles, causing the agent to eventually follow an acceptable path to reward.
The working memory is a set containing all states that are visited before obtaining reward. The working memory is reset whenever reward is obtained.

%----------------------------------------------
\subsubsection{Operational Semantics}
\label{sub:opsem}
We now formalize the cycle-detection algorithm.
In the following, let $\task=\tasktup$ be a task.

%- - - - - - - - - - - - - - - - - - -
\paragraph{Configurations}
A \emph{configuration} of $\task$ is a triple $\cnf=\cnftup$, where 
    $\st\in\states$;
    $\pol$ maps each $\st\in\states$ to an element of $\actions$; and
    $\wm\subseteq\states$.
The function $\pol$ is called the \emph{policy}.
The set $\wm$ is called the \emph{working memory} and it contains the states that are already visited during the execution, but we will reset $\wm$ to $\emptyset$ whenever reward is obtained.
We refer to $\st$ as the \emph{current state} in the configuration, and we also say that $\cnf$ \emph{contains} the state $\st$.
Note that there are only a finite number of possible configurations.
The aim of the learning algorithm is to refine the policy during trials, as we formalize below.

\begin{comment}%%%%%%%%
\begin{remark}
    In numeric reinforcement learning, there is a value function that usually converges together with the policy~\cite{sutton-barto_1998}.
    The working memory in our framework can not directly be compared with such standard value functions, because it does not converge; rather, it is reset after each trial (see below).
    \qed
\end{remark}
\end{comment}%%%%%%%%%

%- - - - - - - - - - - - - - - - - - -
\paragraph{Transitions}
We formalize how to go from one configuration to another, to represent the steps of the running algorithm.
Let $\cnf=\cnftup$ be a configuration.
We say that $\cnf$ is \emph{branching} if $\st\in\wm$; this means that configuration $\cnf$ represents a revisit to state $\st$, and that we want to generate a new action for the current state $\st$.
Next, we define the set $\opt\cnf$ as follows: 
    letting $\actions'=\actions$ if $\cnf$ is branching and $\actions'=\set{\pol(\st)}$ otherwise,
    we define 
    \[
    \opt\cnf=\set{(\act,\st')\mid \act\in\actions'\text{ and }\st'\in\tr(\st,\act)}.
    \]
Intuitively, $\opt\cnf$ contains the \emph{options} of actions and successor states that may be chosen directly after $\cnf$. 
If $\cnf$ is branching then all actions may be chosen, and otherwise we must restrict attention to the action stored in the policy for the current state. 
Note that the successor state depends on the chosen action.

Next, for a configuration $\cnfX 1=\cnftupX 1$ and a pair $(\act,\st')\in\opt{\cnfX 1}$, we define the \emph{successor configuration} $\cnfX 2=\cnftupX 2$ that results from the application of $(\act,\st')$ to $\cnfX 1$, as follows: 
\begin{itemize}   
    \item 
        $\stX 2=\st'$;
    \item
        $\polX 2(\stX 1)=\act$ and $\polX 2(\stB)=\polX 1(\stB)$ for all $\stB\in\states\setminus\set{\stX 1}$; and,
    \item 
        $\wmX 2=\wmX 1 \cup \set{\stX 1}$.
\end{itemize}
We emphasize that only the action and visited-status of the state $\stX 1$ is modified, where $\stX 1$ is the state that is departed from.
We denote the successor configuration as $\apply{\cnfX 1}\act{\st'}$.

A \emph{transition} $\trans$ is a four-tuple $(\cnfX 1,\act,\st',\cnfX 2)$, also denoted as $\cnfX 1\jump{\act\cs\st'}\cnfX 2$, where $\cnfX 1=\cnftupX 1$ and $\cnfX 2=\cnftupX 2$ are two configurations, $(\act\cs\st')\in\opt{\cnfX 1}$, and $\cnfX 2=\apply{\cnfX 1}\act{\st'}$.
We refer to $\cnfX 1$ and $\cnfX 2$ as the \emph{source configuration} and \emph{target configuration}, respectively.
We say that $\trans$ is a \emph{reward transition} if $(\stX 1,\act)\in\rewards$.
Note that there are only a finite number of possible transitions because there are only a finite number of possible configurations.

%- - - - - - - - - - - - - - - - - - -
\paragraph{Trials and Runs}
A \emph{chain} $\chain$ is a sequence of transitions where for each pair $(\trans_1,\trans_2)$ of subsequent transitions, the target configuration of $\trans_1$ is the source configuration of $\trans_2$. Chains could be finite or infinite.

A \emph{trial} is a chain $\chain$ where either 
    \romI\ the chain is infinite and contains no reward transitions; or,
    \romII\ the chain is finite, ends with a reward transition, and contains no other reward transitions.
To rephrase, if a trial is finite then it ends at the first occurrence of reward; and, if there is no reward transition than the trial must be infinite.

In a trial, we say that an occurrence of a configuration is \emph{terminal} if that occurrence is the last configuration of the trial, i.e., the occurrence is the target configuration of the only reward transition. Note that an infinite trial contains no terminal configurations.
    
A \emph{start configuration} is any configuration $\cnf=\cnftup$ where $\st\in\initstates$ and $\wm=\emptyset$; no constraints are imposed on the policy $\pol$.

Now, a \emph{run} $\run$ on the task $\task$ is a sequence of trials, where 
\begin{enumerate}
    \item the run is either an infinite sequence of finite trials, or the run consists of a finite prefix of finite trials followed by one infinite trial;     
    \item the first configuration of the first trial is a start configuration; 
    \item \label{enu:run-trial-succession} whenever one (finite) trial ends with a configuration $\cnfX 1=\cnftupX 1$ and the next trial starts with a configuration $\cnfX 2=\cnftupX 2$, we have 
        \romI\ $\stX 2\in\initstates$; 
        \romII\ $\polX 2=\polX 1$; and,
        \romIII\ $\wmX 2=\emptyset$;%
            \footnote{Note that $\cnfX 2$ satisfies the definition of start configuration.}
        and,
    \item \label{enu:run-trial-starts} if the run contains infinitely many trials then each state $\st\in\initstates$ is used at the beginning of infinitely many trials.
\end{enumerate}
We put condition~\refitem{enu:run-trial-succession} in words: when one trial ends, we start the next trial with a start state, we reuse the policy, and we again reset the working memory.
By resetting the working memory, we forget which states were visited before obtaining the reward. 
The policy is the essential product of a trial.
Condition~\refitem{enu:run-trial-starts}, saying that each start state is used at the beginning of infinitely many trials, expresses that we want to learn the whole task, with all possible start states.

To refer to a precise occurrence of a trial in a run, we use the ordinal of that occurrence, which is a nonzero natural number.

\begin{remark}
    In the above operational semantics, the agent repeatedly navigates from start states to goal states.
    After obtaining immediate reward at a goal state, the agent's location is always reset to a start state.
    One may call such a framework episodic~\cite{sutton-barto_1998}.
    We note that our framework can also be used to study more continuing operational processes, that do not always enforce a strong reset mechanism from goal states back to remote start states. Indeed, a task could define the set of start states simply as the set of all states. In that case, there are runs possible where some trials start at the last state reached by the previous trial, as if the agent is trying to obtain a sequence of rewards; but we still reset the working memory each time when we begin a new trial.
    \qed
\end{remark}

%------------------------------------
\subsection{Convergence}
\label{sub:convergence}

We now define a convergence property to formalize when learning has stopped in a run.
Consider a task $\task=\tasktup$.
Let $\run$ be a run on $\task$.

\begin{definition}
We say that a state $\st\in\states$ \emph{(eventually) becomes stable} in $\run$ if 
    there are only finitely many non-terminal occurrences of branching configurations containing $\st$.
\end{definition}

An equivalent definition is to say that after a while there are no more branching configurations at non-terminal positions containing $\st$. 
Intuitively, eventual stability of $\st$ means that after a while there is no risk anymore that $\st$ is paired with new actions, so $\st$ will definitely stay connected to the same action.%
    \footnote{If a branching configuration $\cnf$ is terminal in a trial, $\cnf$ can not influence the action of its contained state because there is no subsequent transition anymore.}
Note that states appearing only a finite number of times in $\run$ always become stable under this definition.

We say that the run $\run$ \emph{converges} if \romI\ all trials terminate (with reward), and \romII\ eventually all states become stable.
We say that the task $\task$ is \emph{\learnable} if all runs on $\task$ converge.

\begin{remark}
    In a run that converges, note that the policy will eventually become fixed because the only way to change the policy is through branching configurations at non-terminal positions. 
    The lastly formed policy in a run is called the final policy, which is studied in more detail in Section~\ref{sub:final}.
    We emphasize that a converging run never stops, because runs are defined as being infinite; the final policy remains in use indefinitely, but it is not updated anymore.
    
    We would also like to emphasize that in a converging run, eventually, the trials contain no cycles before reaching reward: the only moment in a trial where a state could be revisited, is in the terminal configuration, i.e., in the target configuration of the reward transition.
    \qed
\end{remark}

%------------------------------------
\subsection{Fairness}
\label{sub:fairness}

There are two choice points in each transition of the operational semantics: 
\begin{itemize}
    \item if the source configuration of the transition is branching, i.e., the current state is revisited, then we choose a new random action for the current state; and, 
    \item whenever we apply an action $\act$ to a state $\st$, we can in general choose among several possible successor states in $\tr(\st,\act)$.
\end{itemize}
Fairness assumptions are needed to give the learning algorithm sufficient opportunities to detect problems and try better policies~\cite{francez_1986}. 
Intuitively, in both choice points, the choice should be independent of what the policy and working memory say about states other than the current state.
This intuition is related to the Markov assumption, or \emph{independence of path} assumption~\cite{sutton-barto_1998}. Below, we formalize this intuition as a fairness notion for the operational semantics of Section~\ref{sub:opsem}.

We say that a run $\run$ is \emph{fair} if for each configuration $\cnf$ that occurs infinitely often at non-terminal positions, for each $(\act,\st')\in\opt\cnf$, the following transition occurs infinitely often:
\[
    \cnf
        \jump{\act\cs\st'}
    \apply\cnf\act{\st'}.
\]
We say that a task $\task$ is \emph{\learnableF} if all fair runs of $\task$ converge.

\begin{remark}
    \newcommand{\mycount}[1]{\fname{count}(#1)}
    There is always a fair run for any task, as follows.
    For each possible configuration $\cnf$, we could conceptually order the set $\opt{\cnf}$. 
    During a run, we could also keep track for each occurrence $i\in\natzero$ of a configuration $\cnf$ how many times we have already seen configuration $\cnf$ in the run, excluding the current occurrence; we denote this number as $\mycount i$.
    
    We begin the first trial with a random start configuration $\cnfX 1$, i.e., we choose a random start state and a random policy.
    We next choose the option $(\actX 1,\stX 2)\in\opt{\cnfX 1}$ with the first ordinal in the now ordered set $\opt{\cnfX 1}$.
    Now, for all the subsequent occurrences $i$ of a configuration $\cnf$ in the run, we choose the option with ordinal $j=(\mycount{i}\,\text{mod}\,\ssize{\opt{\cnf}})+1$ in the set $\opt{\cnf}$.
    So, if a configuration occurs infinitely often at non-terminal positions then we continually rotate through all its options.
    Naturally, trials end at the first occurrence of reward, and then we choose another start state; taking care to use all start states infinitely often.
    \qed
\end{remark}

%=======================================
\section{Results}
\label{sec:results}

The cycle-detection learning algorithm formalized in Section~\ref{sub:opsem} continually marks the encountered states as visited. 
At the end of trials, i.e., after obtaining reward, each state is again marked as unvisited. 
If the algorithm encounters a state $\st$ that is already visited within the same trial, the algorithm proposes to generate a new action for $\st$. 
Intuitively, if the same state $\st$ is encountered in the same trial, the agent might be running around in cycles and some new action should be tried for $\st$ to escape from the cycle.
It is important to avoid cycles if we want to achieve an eventual upper bound on the length of a trial, i.e., an upper bound on the time it takes to reach reward from a given start state.

Repeatedly trying a new action for revisited states might eventually lead to reward, and thereby terminate the trial.
In this learning process, the nondeterminism of the task can be both helpful and hindering: 
    nondeterminism is helpful if transitions choose successor states that are closer to reward, 
    but nondeterminism is hindering if transitions choose successor states that are further from reward or might lead to a cycle.
Still, on some suitable tasks, like reducible tasks, the actions that are randomly tried upon revisits might eventually globally form a policy that will never get trapped in a cycle ever again (see Theorem~\ref{theo:algvisit} below).

The outline of this section is as follows.
In Section~\ref{sub:sufficient}, we present a sufficient condition for tasks to be \learnableF. 
In Section~\ref{sub:final} we discuss how a simulator could detect that convergence has occurred in a fair run.
In Section~\ref{sub:necessary} we present necessary conditions for tasks to be \learnableF.
%We always assume fairness throughout this section.

%---------------------------------------------------
\subsection{Sufficient Condition for Convergence}
\label{sub:sufficient}

Intuitively, if a task is reducible then we might be able to obtain a policy that on each start state leads to reward without revisiting states in the same trial.
As long as revisits occur, we keep searching for the acyclic flow of states that is implied by reducibility.
We can imagine that states near the goal states, i.e., near immediate reward, tend to more quickly settle on an action that leads to reward. 
Subsequently, states that are farther removed from immediate reward can be reduced to states near goal states, and this growth process propagates through the entire state space.
This intuition is confirmed by the following convergence result:

\begin{theorem}
    \label{theo:algvisit}
    All reducible tasks are \learnableF.
\end{theorem}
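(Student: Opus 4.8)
The plan is to establish the two clauses of convergence separately — that every trial terminates, and that every state eventually becomes stable — and to drive both by the same mechanism: fairness forcing a \emph{reducing} action while the reducibility rank decreases. Two elementary facts underlie everything. First, there are only finitely many configurations, so any event occurring infinitely often is witnessed by some \emph{single} configuration that recurs infinitely often, and fairness then applies to it. Second, reducibility equips each state $\st$ with a finite rank, the least $i$ with $\st\in\rl i$; here $\rl 1=\goals{\task}$, and every state of rank $i$ has a reducing action $\act'$ with $\tr(\st,\act')\subseteq\rl{i-1}$ (a rewarding action when $i=1$). Throughout I would use that, because a trial's working memory is only reset between trials, it grows monotonically within a trial, and that a state is already in the working memory exactly when its configuration is branching.

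First I would rule out infinite trials, which gives clause (i). A run has at most one infinite trial, necessarily the last. In it the working memory never resets, so it stabilizes at some set $W$, after which every current state lies in $W$ and hence every configuration is branching and non-terminal. Among the states visited infinitely often in this tail, pick one, $\st$, of minimal rank $m$; some branching configuration with current state $\st$ recurs infinitely often at non-terminal positions, so fairness forces its reducing option. If $m=1$ this is a reward transition, contradicting the absence of reward in an infinite trial; if $m\ge 2$ it reaches a state of $\rl{m-1}$, i.e.\ of rank ${<}\,m$, which is thereby visited infinitely often, contradicting minimality of $m$.

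For stability I would prove, by strong induction on the rank $k$, that no state of rank $k$ is unstable. In the base case, an unstable goal state $\st$ has a branching configuration recurring infinitely often, so fairness forces its rewarding action, and this action \emph{sticks}: from the ensuing trial on, the first (non-branching) visit to $\st$ immediately rewards and ends the trial, so $\st$ is never revisited and its policy entry never changes again — contradicting instability. For the step, assume all states of rank ${<}\,k$ are stable and suppose $\st^*$ of rank $k$ is unstable, with a branching configuration $\cnf^*$, current state $\st^*$ and working memory $\wm$, recurring infinitely often. Since $\st^*$ is not a goal, fairness forces each reducing option $(\act',\st')$ with $\st'\in\tr(\st^*,\act')\subseteq\rl{k-1}$ via a non-reward transition. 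If some such $\st'$ already lies in $\wm$, the target is a branching, non-terminal, infinitely recurring configuration of a state of rank ${<}\,k$, making $\st'$ unstable and contradicting the induction hypothesis.

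The hard part will be the remaining case, where the reducing successors of the recurring $\cnf^*$ are always fresh; this is the step I expect to be the main obstacle. What must be shown is that, because $\st^*$ is revisited infinitely often, one of its reducing successors $\st'$ is \emph{eventually reached while already recorded in the working memory}, so that the contradiction of the previous paragraph is triggered. The idea is that once $\st'$ has been visited in a trial it remains in the working memory until the trial ends, so it suffices to locate a branching configuration of $\st^*$ that both recurs infinitely often and already contains some reducing successor; I would obtain this by combining the infinitely many revisits of $\st^*$ with the fact — forced by fairness — that the reducing successors are themselves visited infinitely often, and then applying the pigeonhole principle over the finitely many configurations. The delicate point is the ``pass-through'' phenomenon: a lower state may be traversed once with a non-reducing policy and never revisited, which is consistent with stability; the argument must show this cannot persist alongside an $\st^*$ whose only escapes are its reducing actions, since fairness repeatedly drives the run back through those successors until one is hit a second time within a single trial. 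Once this is in place the induction closes, yielding clause (ii); together with termination, every fair run on a reducible task converges.
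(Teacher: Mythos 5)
Your Part~1 (termination) and the base case of Part~2 are correct; your Part~1 even takes a mildly different route from the paper (a minimal-rank argument on the stabilized working memory of the hypothetical infinite trial, rather than an induction over the reducibility layers), and it works. The easy half of your inductive step is also sound. But the ``remaining case'' that you yourself flag as the main obstacle is a genuine gap, and it is exactly where the substance of the theorem lies. Your sketch --- $\st^*$ is revisited infinitely often, fairness forces its reducing successors to be visited infinitely often, then pigeonhole --- does not close it. The fairness-forced transitions out of your recurring configuration $\cnf^*$ land, by the very hypothesis of this case, on \emph{fresh} successors: each such visit is a first visit within its trial, hence a non-branching configuration, and non-branching occurrences never threaten stability no matter how often they recur across the run. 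Stability of a successor $\st'$ can only be contradicted by producing, infinitely often, a \emph{branching} configuration of $\st'$ (a within-trial revisit) at a non-terminal position. Passing from ``visited infinitely often across trials'' to ``revisited within a single trial'' is the crux, and pigeonhole over the finitely many configurations alone cannot do it: you need one recurring branching configuration of $\st^*$ whose working memory \emph{already} contains a specific reducing successor $\st'$, so that the forced option $(\act,\st')$ produces a branching occurrence of $\st'$; nothing in your setup guarantees such a configuration exists.

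The paper manufactures it. It defines $(\st,\act)$-chains: pieces of a trial that start at $\st$ with the reducing action $\act$ (thereby visiting and marking some $\st'\in\tr(\st,\act)\subseteq\rl{k-1}$) and later return to $\st$ using only non-reward transitions. It then proves that infinitely many trials contain such a chain, by a case split on whether infinitely many trials end with $\pol(\st)=\act$: if yes, the chain starts at the first (non-branching) occurrence of $\st$ in a later trial, where the stored action $\act$ is followed and $\st$ subsequently becomes branching; if no, one works in a suffix of the run where no trial ends with $\act$ assigned to $\st$, so each fairness-forced $(\act,\st')$ transition must be followed, within the same trial, by another branching occurrence of $\st$ that overwrites the action. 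Only after this construction does pigeonhole apply: some single configuration $\cnf$ containing $\st$ ends such chains in infinitely many trials, hence is branching, recurs infinitely often at non-terminal positions, and has both $\st$ and $\st'$ in its working memory together with the option $(\act,\st')$; fairness at $\cnf$ then yields branching, non-terminal occurrences of $\st'$ infinitely often, contradicting the induction hypothesis. Without this construction (or an equivalent device), your induction does not close.
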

\begin{proof}
    Let $\task=\tasktup$ be a reducible task.   
    Let $\run$ be a fair run on $\task$. 
    We show convergence of $\run$. 
    In Part~1 of the proof, we show that all trials in $\run$ terminate (with reward). 
    In Part~2, we show that eventually all states become stable in $\run$.
        
    %-  -  -  -  -  -  -  -  -  -  -  -  -
    \proofpara{Part 1: Trials terminate}
    Let 
    \[
        \rl 1,\, \rl 2,\, \ldots
    \]
    be the reducibility layers for $\task$ as defined in Section~\ref{sub:task}, where $\rl 1=\goals\task$.
    Let $\trial$ be a trial in $\run$.
    To show finiteness of $\trial$, and thus termination of $\trial$, we show by induction on $i=1, 2, \ldots$ that the states in $\rl i$ occur only finitely many times in $\trial$.
    Because $\task$ is reducible, there is an index $n\in\natzero$ for which $\rl{n}=\rl{n+1}=\states$, and therefore our inductive proof shows that every state only occurs a finite number of times in the trial $\trial$; hence, $\trial$ is finite.
    
    Before we continue, we recall that a state $\st$ is marked as visited after its first occurrence in the trial; any occurrence of $\st$ after its first occurrence is therefore in a branching configuration.
        
    \proofsubpara{Base case}
    Let $\st\in\rl 1=\goals\task$.
    Towards a contradiction, suppose $\st$ occurs infinitely often in trial $\trial$, making $\trial$ infinite.
    Because there are only a finite number of possible configurations, there is a configuration $\cnf$ containing $\st$ that occurs infinitely often in $\trial$ at non-terminal positions (because the trial is now infinite).
    Configuration $\cnf$ is branching because it occurs more than once.%
        \footnote{To see this, take for instance the second occurrence of $\cnf$ in the trial. That occurrence represents a revisit to $\st$, so $\st$ is in the working memory set of $\cnf$.}
    By definition of $\st\in\goals\task$, there is an action $\act\in\actions$ such that $(\st,\act)\in\rewards$. 
    Since always $\tr(\st,\act)\neq\emptyset$, we can choose some $\st'\in\tr(\st,\act)$. 
    We have $(\act,\st')\in\opt\cnf$ because $\cnf$ is branching.
    By fairness, the following transition must occur infinitely often in the trial:
    \[
        \cnf 
            \jump{\act\cs\st'}
        \apply\cnf\act{\st'}.
    \]
    But this transition is a reward transition, so the trial would have already ended at the first occurrence of this transition. Hence $\st$ can not occur infinitely many times; this is the desired contradiction.
        
    \proofsubpara{Inductive step}    
    Let $i\geq 2$, and let us assume that states in $\rl{i-1}$ occur only finitely many times in $\trial$.
    Let $\st\in\rl{i}\setminus\rl{i-1}$. 
    By definition of $\rl i$, there is some action $\act\in\actions$ such that $\tr(\st,\act)\subseteq\rl{i-1}$.
    Towards a contradiction, suppose $\st$ occurs infinitely often in $\trial$, making $\trial$ infinite.
    Like in the base case, there must be a branching configuration $\cnf$ containing $\st$ that occurs infinitely often in the trial (at non-terminal positions).
    Since always $\tr(\st,\act)\neq\emptyset$, we can choose some $\st'\in\tr(\st,\act)\subseteq\rl{i-1}$.
    We have $(\act,\st')\in\opt\cnf$ because $\cnf$ is branching.
    By fairness, the following transition must occur infinitely often in the trial:
    \[
        \cnf 
            \jump{\act\cs\st'}
        \apply\cnf\act{\st'}.
    \]
    But then $\st'$ would appear infinitely often in trial $\trial$. This is the desired contradiction, because the induction hypothesis says that all states in $\rl{i-1}$ (including $\st'$) occur only finitely many times in $\trial$.
        
    %-  -  -  -  -  -  -  -  -  -  -  -  -
    \proofpara{Part 2: Stability of states}    
    We now show that all states eventually become stable in the fair $\run$.       
    Let 
    \[
        \rl 1,\, \rl 2,\, \ldots
    \]
    again be the reducibility layers for $\task$ as above, where $\rl 1=\goals\task$.
    We show by induction on $i=1,2,\ldots$ that states in $\rl i$ become stable in $\run$.
    Since $\task$ is reducible, there is an index $n\in\natzero$ such that $\rl n=\rl{n+1}=\states$, so our inductive proof shows that all states eventually become stable.
    
    Before we continue, we recall that Part~1 of the proof has shown that all trials are finite. So, whenever we say that a configuration occurs infinitely often in the run, this means that the configuration occurs in infinitely many trials. 
    Similarly, if a transition occurs infinitely often in the run, this means that the transition occurs in infinitely many trials.
    
    \proofsubpara{Base case}    
    Let $\st\in\rl 1=\goals\task$.         
    Towards a contradiction, suppose $\st$ would not become stable.
    This means that there are infinitely many non-terminal occurrences of branching configurations containing $\st$.%
        \footnote{For completeness, we recall that if $\st$ would occur only a finite number of times in the run then we can immediately see in the definition of stability that $\st$ becomes stable.}
    Because there are only finitely many possible configurations, there must be a branching configuration $\cnf$ containing $\st$ that occurs infinitely often at non-terminal positions.
    
    By definition of $\st\in\goals\task$, there is an action $\act\in\actions$ such that $(\st,\act)\in\rewards$.
    Since always $\tr(\st,\act)\neq\emptyset$, we can choose some $\st'\in\tr(\st,\act)$.
    We have $(\act,\st')\in\opt\cnf$ because $\cnf$ is branching.
    By fairness, the following transition $\trans$ must occur infinitely often in the run:
    \[
        \cnf 
            \jump{\act\cs\st'}
        \apply\cnf\act{\st'}.
    \]
    Transition $\trans$ is a reward transition because $(\st,\act)\in\rewards$. 
    Let $j$ be the index of a trial containing transition $\trans$; this implies that $\trans$ is the last transition of trial $j$.
    We now show that any non-terminal occurrences of $\st$ after trial $j$ must be in a non-branching configuration. Hence, $\st$ becomes stable; this is the desired contradiction.
    
    Consider the first trial index $k$ after $j$ in which $\st$ occurs again at a non-terminal position.    
    Let configuration $\cnfX 1=\cnftupX 1$ with $\stX 1=\st$ be the first occurrence of $\st$ in trial $k$ (which is at a non-terminal position). 
    Note that $\polX 1(\st)=\act$ because 
        \romI\ trial $j$ ends with the assignment of action $\act$ to $\st$ (through transition $\trans$), and 
        \romII\ the trials between $j$ and $k$ could not have modified the action of $\st$.
    Further, configuration $\cnfX 1$ is not branching because $\st$ is not yet flagged as visited at its first occurrence in trial $k$.
    This means that at any occurrence of $\cnfX 1$, trial $k$ must select an option $(\act,\st'')\in\opt{\cnfX 1}$, with action $\act$ and $\st''\in\tr(\st,\act)$, and perform the corresponding transition $\trans'$:
    \[
        \cnfX 1
            \jump{\act\cs\st''}
        \apply{\cnfX 1}\act{\st''}.
    \]
    Again, since $(\st,\act)\in\rewards$, trial $k$ ends directly after transition $\trans'$; no branching configuration containing $\st$ can occur in trial $k$ at a non-terminal position.%
        \footnote{Although it is possible that $\st$ is directly revisited from itself, it does not matter whether the terminal configuration of the trial is branching or not.}
    This reasoning can now be repeated for all following trials to see that there are no more non-terminal occurrences of branching configurations containing $\st$.

    \proofsubpara{Inductive step}
    Let $i\geq 2$. 
    We assume for each $\st'\in\rl{i-1}$ that $\st'$ eventually becomes stable.
    Now, let $\st\in\rl{i}\setminus\rl{i-1}$.
    By definition of $\rl i$, there is an action $\act\in\actions$ such that $\tr(\st,\act)\subseteq\rl{i-1}$.
    Towards a contradiction, suppose that $\st$ does not become stable.
    Our aim is to show that now also at least one $\st'\in\tr(\st,\act)$ does not become stable, which would contradict the induction hypothesis.   
    
    Regarding terminology, we say that a chain is a \emph{$(\st,\act)$-chain} if \romI\ the chain contains only non-reward transitions and \romII\ the chain has the following desired form:
    \[
        \cnfX 1
            \jump{\actX 1\cs\stX 2}
        \cnfX 2
            \jump{\actX 2\cs\stX 3}
        \ldots
            \jump{\actX{n-1}\cs\stX n}
        \cnfX n,
    \]
    denoting $\cnfX j=\cnftupX j$ for each $j\in\set{1,\ldots,n}$, where 
        $\stX 1=\stX n=\st$ and
        $\actX 1=\act$.
    Note that such a chain starts and ends with an occurrence of $\st$, so $\st$ is revisited in the chain. Moreover, the first transition performs the action $\act$ from above.        
    Next, we say that a trial is a \emph{$(\st,\act)$-trial} if the trial contains a $(\st,\act)$-chain.
    In principle, each $(\st,\act)$-trial could embed a different $(\st,\act)$-chain.
    
    To see that there are infinitely many occurrences of $(\st,\act)$-trials in $\run$, we distinguish between the following two cases.
    
    \begin{itemize}
        \item Suppose that in $\run$ there are infinitely many occurrences of trials that end with a policy $\pol$ where $\pol(\st)=\act$, i.e., action $\act$ is assigned to $\st$. 
        Let $j$ be the index of such a trial occurrence. 
        Because by assumption $\st$ does not become stable, we can consider the first trial index $k$ after $j$ in which $\st$ occurs in a branching configuration at a non-terminal position. 
        Note that trials between trial $j$ and trial $k$ do not modify the action of $\st$.
        Now, the first occurrence of $\st$ in trial $k$ is always non-branching, and thus we perform action $\act$ there. 
        The subsequence in trial $k$ starting at the first occurrence of $\st$ and ending at some branching configuration of $\st$ at a non-terminal position, is a $(\st,\act)$-chain: the chain starts and ends with $\st$, its first transition performs action $\act$, and it contains only non-reward transitions because it ends at a non-terminal position.
        Hence, trial $k$ is a $(\st,\act)$-trial.
        
        \item Conversely, suppose that in $\run$ there are only finitely many occurrences of trials that end with a policy $\pol$ where $\pol(\st)=\act$.
        Let $\run'$ be an (infinite) suffix of $\run$ in which no trial ends with action $\act$ assigned to $\st$.
        Because by assumption $\st$ does not become stable, and because the number of possible configurations is finite, there is a branching configuration $\cnf$ containing $\st$ that occurs infinitely often at non-terminal positions in $\run'$.
        Choose some $\st'\in\tr(\st,\act)$.
        We have $(\act,\st')\in\opt{\cnf}$ because $\cnf$ is branching.
        By fairness, the following transition $\trans$ occurs infinitely often in $\run'$:
        \[
            \cnf 
                \jump{\act\cs\st'}
            \apply\cnf\act{\st'}.
        \]
        Let $j$ be the index of a trial occurrence in $\run'$ that contains transition $\trans$; there are infinitely many such indexes because all trials in $\run$ are finite (see Part 1 of the proof).
        Since transition $\trans$ attaches action $\act$ to $\st$, we know by definition of $\run'$ that any occurrence of $\trans$ in trial $j$ is followed by at least one other transition from state $\st$ that attaches an action $\actB$ to $\st$ with $\actB\neq\act$; this implies that after each occurrence of transition $\trans$ in trial $j$ there is a branching configuration of $\st$ at a non-terminal position.
        In trial $j$, a subsequence starting at any occurrence of $\trans$ and ending with the first subsequent branching configuration of $\st$ at a non-terminal position, is a $(\st,\act)$-chain: the chain starts and ends with $\st$, its first transition performs action $\act$, and the chain contains only non-reward transitions because it ends at a non-terminal position.
        Hence, trial $j$ is a $(\st,\act)$-trial.
    \end{itemize}
    
    We have seen above that there are infinitely many occurrences of $(\st,\act)$-trials in $\run$.
    Because there are only a finite number of possible configurations, there is a configuration $\cnf$ containing $\st$ that is used in infinitely many occurrences of $(\st,\act)$-trials as the last configuration of a $(\st,\act)$-chain. 
    Note that $\cnf$ occurs infinitely often at non-terminal positions since $(\st,\act)$-chains contain no reward transitions.
        
    Next, we can choose from some occurrence of a $(\st,\act)$-trial in the run some $(\st,\act)$-chain $\chain$ where in particular the last configuration of $\chain$ is the configuration $\cnf$.
    Formally, we write $\chain$ as
    \[
        \cnfX 1
            \jump{\actX 1\cs\stX 2}
        \cnfX 2
            \jump{\actX 2\cs\stX 3}
        \ldots
            \jump{\actX{n-1}\cs\stX n}
        \cnfX n,
    \]
    where $\cnfX n=\cnf$, and denoting $\cnfX j=\cnftupX j$ for each $j\in\set{1,\ldots,n}$, where 
        $\stX 1=\stX n=\st$ and
        $\actX 1=\act$. 
        We recall that all transitions of $\chain$ are non-reward transitions.
    Note that $2<n$: we have $\st\neq\stX 2$ because $\st\notin\rl{i-1}$ and $\stX 2\in\tr(\st,\act)\subseteq\rl{i-1}$.
    
    In chain $\chain$, we have certainly marked state $\st$ as visited after its first occurrence, causing configuration $\cnfX n$ to be branching.
    This implies $(\act,\stX 2)\in\opt{\cnfX n}$, where $(\act,\stX 2)$ is the same option as taken by the first transition of $\chain$, since $\actX 1=\act$.
    Also, since $2<n$, we have certainly marked state $\stX 2$ as visited after its first occurrence in $\chain$; this implies $\stX 2\in\wmX n$.
    Next, since the configuration $\cnfX n=\cnf$ occurs infinitely often at non-terminal positions (see above), the following transition also occurs infinitely often by fairness:
    \[
        \cnfX n
            \jump{\act\cs\stX 2}
        \cnfX{n+1},
    \]
    where $\cnfX{n+1}=\cnftupX{n+1}=\apply{\cnfX n}{\act}{\stX 2}$.
    Because $\stX{n+1}=\stX 2$ and $\stX 2\in\wmX n\subseteq\wmX{n+1}$, configuration $\cnfX{n+1}$ is branching.
    Moreover, we know that $(\st,\act)\notin\rewards$ since no transition of $\chain$ is a reward transition, including the first transition.
        %\footnote{An alternative way to see $(\st,\act)\notin\rewards$ is as follows. Towards a contradiction, if $(\st,\act)\in\rewards$ then $\st\in\goals\task$ and subsequently $\st\in\rl{i-1}$, which is false.}    
    So, the branching configuration $\cnfX{n+1}$ occurs infinitely often at non-terminal positions.
    Hence, $\stX 2$ would not become stable. 
    Yet, $\stX 2\in\tr(\st,\act)\subseteq\rl{i-1}$, and the induction hypothesis on $\rl{i-1}$ says that $\stX 2$ does become stable; this is the desired contradiction.
    \qedhere
\end{proof}

\begin{remark}
\label{remark:theo-algvisit}
By Theorem~\ref{theo:algvisit}, the trials in a fair run on a reducible task eventually contain a number of non-terminal configurations that is at most the number of states; otherwise at least one state would never become stable.%
    \footnote{If there would be infinitely many trials that contain more non-terminal configurations than states, then in infinitely many trials there is a revisit to a state (in a branching configuration) on a non-terminal position.
    Since there are only finitely many states, there would be at least one state $\st$ that in infinitely many trials occurs in a branching configuration on a non-terminal position; this state $\st$ does not become stable by definition.}
So, we get a relatively good eventual upper bound on trial length. 
However, Theorem~\ref{theo:algvisit} provides no information on the waiting time before that upper bound will emerge, because that waiting time strongly depends on the choices made by the run regarding start states of trials, tried actions, and successor states (see also Section~\ref{sec:conclusion}).

Because we seek a policy that avoids revisits to states in the same trial, an important intuition implied by Theorem~\ref{theo:algvisit} is that for reducible tasks eventually the trials of a run follow paths without cycles through the state space.
    The followed paths are still influenced by nondeterminism, but they never contain a cycle. 
    Also, a path followed in a trial is not necessarily the shortest possible path to reward, because the discovery of paths depends on experience, i.e., on the order in which actions were tried during the learning process. The experience dependence was experimentally observed, e.g.\ by \citet{fremaux_2013}.
\qed
\end{remark}

\begin{remark}
    The order in which states become stable in a fair run does not necessarily have to follow the order of the reducibility layers of Section~\ref{sub:task}. 
    In general, it seems possible that some states that are farther removed from goal states could become stable faster than some states nearer to goal states; but, to become stable, the farther removed states probably should first have some stable strategy to the goal states.
    
    To see that simulations do not exactly follow the inductive reasoning of the proof of Theorem~\ref{theo:algvisit}, one could compare, in the later Section~\ref{sec:examples}, the canonical policy implied by reducibility in Figure~\ref{fig:corridor_template} with an actual final policy in Figure~\ref{fig:forward_backward}.
    \qed
\end{remark}

The following example illustrates the necessity of the fairness assumption in Theorem~\ref{theo:algvisit}.
So, although the convergence result for reducible tasks appears natural, the example reveals that subtle notions, like the fairness assumption, should be taken into account to understand learning.
\begin{example}
    \newcommand{\cf}[3]{(#1,#2,\set{#3})}
    \label{ex:markov}    
    Consider again the task $\task$ from Example~\ref{ex:reduce}, that is also visualized in Figure~\ref{fig:reduce}.    
    In the following, for ease of notation, we will denote configurations as triples $(x,y,Z)$, where 
        $x$ is the current state; 
        $y$ is the action assigned by the policy to the specific state $1$, with action $a$ assigned to all other states; 
        and $Z$ is the set of visited states as before.
    
    Consider now the following trial $\trial_a$ where the initial policy has assigned action $a$ to all states, including the start state $1$:
    \[
        \cf 1 a ~
            \jump{a, 1}
        \cf 1 a {1}
            \jump{b, 2}
        \cf 2 b {1}
            \jump{a, 3}
        \cf 3 b {1,2}
            \jump{a, 3}
        \cf 3 b {1,2,3}.
    \]
    This is indeed a valid trial because the last transition is a reward transition. Note also that a revisit to state $1$ occurs in the first transition. The configuration $\cf 1 a {1}$ is thus branching, which implies that the option $(b, 2)$ may be chosen there.
    At the end of trial $\trial_a$, action $b$ is assigned to state $1$ and action $a$ is assigned to the other states.
    
    Consider also the following trial $\trial_b$ where the initial policy has assigned action $b$ to state $1$ and $a$ to all other states:
    \[
        \cf 1 b ~
            \jump{b, 2}
        \cf 2 b {1}
            \jump{a, 1}
        \cf 1 b {1,2}
            \jump{a, 3}
        \cf 3 a {1,2}
            \jump{a, 3}
        \cf 3 a {1,2,3}.
    \]
    The last transition is again a reward transition. Note that a revisit occurs to state $1$ in the second transition. The configuration $\cf 1 b {1,2}$ is therefore branching, which implies that the option $(a,3)$ may be chosen there.
    At the end of trial $\trial_b$, action $a$ is assigned to all states, including state $1$.
    
    Now, let $\run$ be the run that alternates between trials $\trial_a$ and $\trial_b$ and that starts with trial $\trial_a$.        
    The state $1$ never becomes stable in $\run$ because we assign action $a$ and action $b$ to state $1$ in an alternating fashion. 
    So, run $\run$ does not converge because there are infinitely many non-terminal occurrences of branching configurations containing state $1$.

    Although run $\run$ satisfies all requirements of a valid run, $\run$ is not fair.
    For example, although the configuration $\cf 1 b {1,2}$ occurs infinitely often (due to repeating trial $\trial_b$), this configuration is never extended with the valid option $(b, 2)$ that could propagate revisits of state $1$ to revisits of state $2$ in the same trial; revisits to state $2$ could force state $2$ to use the other action $b$, which in turn could aid state $1$ in becoming stable.        
    
    In conclusion, because task $\task$ is reducible and yet the valid (but unfair) run $\run$ does not converge, we see that Theorem~\ref{theo:algvisit} does not hold in absence of fairness.
    \qed
\end{example}

%---------------------------------------------------
\subsection{Detecting the Final Policy}
\label{sub:final}

We refer to the lastly formed policy of a run as the final policy.
For an increased understanding of what convergence means, it appears interesting to say something about the form of the final policy. 
In particular, we would like to understand what kind of paths are generated by the final policy.
As an additional benefit, recognizing the form of the final policy allows us to detect the convergence point in a simulation.%
    \footnote{Precise convergence detection is possible because our framework does not model reward numerically and thus there are no numerical instability issues near convergence. The convergence detection enables some of the simulation experiments in Section~\ref{sec:examples}.}

We syntactically characterize the final policy in Theorem~\ref{theo:final}.
In general, verifying the syntactical property of the final policy requires access to the entire set of task states.
In this subsection, we do not require that tasks are reducible.

We first introduce the two key parts of the syntactical characterization, namely, the so-called \emph{forward} and \emph{backward} sets of states induced by a policy.
As we will see below, the syntactical property says that the forward set should be contained in the backward set.
%
%- - - - - - - - - - - - - - - - - - - - - 
\paragraph{Forward and Backward}
Let $\task=\tasktup$ be a task that is \learnableF. 
To make the notations below easier to read, we omit the symbol $\task$ from them. It will always be clear from the context which task is meant.

Let $\pol:\states\to\actions$ be a policy, i.e., each $\st\in\states$ is assigned an action from $\actions$.
First, we define 
\[
    \ground=\set{\st\in\goals\task\mid(\st,\pol(\st))\in\rewards};
\] this is the set of all goal states that are assigned a rewarding action by the policy.
Next, we define two sets $\forward\subseteq\states$ and $\backward\subseteq\states$, as follows.
For the set $\forward$, we consider the infinite sequence $\fl 1$, $\fl 2$, \ldots\ of sets, where $\fl 1=\initstates$ and for each $i\geq 2$,
\[
    \fl i = \fl{i-1} \cup 
                \bigcup_{\mathlarger{\st\in\fl{i-1}\setminus\ground}}\tr(\st,\pol(\st)).
\]
We define $\forward=\bigcup_{i\in\natzero}\fl i$.
Note that $\forward\subseteq\states$.
Intuitively, the set $\forward$ contains all states that are reachable from the start states by following the policy. 
In the definition of $\fl{i}$ with $i\geq 2$, we remove $\ground$ from the extending states because we only want to add states to $\forward$ that can occur at non-terminal positions of trials.%
    \footnote{Possibly, some states directly reachable from $\ground$ are still in $\forward$ because those states are also reachable from states outside $\ground$.}

For the set $\backward$, we consider the infinite sequence $\bl 1$, $\bl 2$, \ldots\ of sets, where $\bl 1=\ground$ and for each $i\geq 2$,
\[
    \bl i = \bl{i-1} \cup \set{\st\in\states\mid\tr(\st,\pol(\st))\subseteq\bl{i-1}}.
\]
We define $\backward=\bigcup_{i\in\natzero}\bl i$. Note that $\backward\subseteq\states$.
Intuitively, $\backward$ is the set of all states that are reduced to the goal states in $\ground$ by the policy.

For completeness, we remark that the infinite sequences $\fl 1$, $\fl 2$, \ldots, and $\bl 1$, $\bl 2$, \ldots, each have a fixpoint because $\states$ is finite.

%- - - - - - - - - - - - - - - - - - - - - 
\paragraph{Final Policy}

We formalize the final policy.
Let $\task$ be a task that is \learnableF.
Let $\run$ be a fair run on $\task$, which implies that $\run$ converges.
We define the \emph{convergence-trial} of $\run$ as the smallest trial index $i$ for which the following holds: trial $i$ terminates and after trial $i$ there are no more branching configurations at non-terminal positions.%
    \footnote{With this definition of convergence-trial, a run converges if and only if the run contains a convergence-trial.}
This implies that after trial $i$ the policy can not change anymore, because to change the action assigned to a state $\st$, the state $\st$ would have to occur again in branching configuration at a non-terminal position.
We define the \emph{final policy} of $\run$ to be the policy at the end of the convergence-trial.
In principle, different converging runs can have different final policies.

Now, we can recognize the final policy with the following property, that intuitively says that any states reachable by the policy are also safely reduced by the policy to reward:

\begin{theorem}
    \label{theo:final}
    Let $\task$ be a task that is \learnableF.%
        \footnote{In contrast to Theorem~\ref{theo:algvisit}, we do not require that $\task$ is reducible.}
    Let $\run$ be a converging fair run of $\task$.
    A policy $\pol$ occurring in run $\run$ at the end of a trial is the final policy of $\run$ if and only if
    \[
    \forward\subseteq\backward.
    \]
\end{theorem}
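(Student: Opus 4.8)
The plan is to prove the two implications of the biconditional separately, using the least-fixpoint definitions of $\backward$ and $\forward$ together with fairness of $\run$ and the run-structure conditions of Section~\ref{sub:opsem}.

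For the direction that assumes $\forward\subseteq\backward$ and concludes that $\pol$ is final, I would first attach to every $\st\in\backward$ its \emph{rank}, the least $i$ with $\st\in\bl i$; then $\bl 1=\ground$ are exactly the rank-$1$ states, and every $\st\in\backward\setminus\ground$ has $\tr(\st,\pol(\st))\subseteq\backward$ with all successors of strictly smaller rank. Since $\initstates=\fl 1\subseteq\forward\subseteq\backward$, any trial reusing $\pol$ begins in $\backward$, and following $\pol$ the rank strictly decreases at every non-goal state, so such a trial reaches $\ground$ and obtains reward in finitely many steps; strictly decreasing ranks make all visited states distinct, so no branching configuration arises at a non-terminal position, for \emph{every} resolution of the nondeterminism (the entire successor set lies in $\backward$). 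Consequently, once $\pol$ appears at the end of a trial, no later trial changes the policy or branches at a non-terminal position; that trial already witnesses the convergence-trial condition, so the convergence-trial is at or before it, and as the policy is constant from the convergence-trial onward, $\pol$ equals the final policy.

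For the converse I would argue the contrapositive. Assuming $\forward\not\subseteq\backward$, pick $\st^{*}\in\forward\setminus\backward$. From $\st^{*}\in\forward$, the definition of $\fl i$ yields a $\pol$-path $\st_0\to\cdots\to\st^{*}$ from a start state $\st_0\in\initstates$ whose states strictly before $\st^{*}$ all lie outside $\ground$. From $\st^{*}\notin\backward$, the fixpoint equation gives $\states\setminus\backward=\set{\st\in\states\mid\st\notin\ground\text{ and }\tr(\st,\pol(\st))\not\subseteq\backward}$, so $\st^{*}$ extends to an infinite $\pol$-path that stays in $\states\setminus\backward$ and hence avoids $\ground$; as $\states$ is finite, this path must revisit a state. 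Concatenating the two pieces produces a $\pol$-path from a start state all of whose transitions are non-reward (every state on it avoids $\ground$) and on which some state recurs; taking the first repeat gives a \emph{branching} configuration $\cnf$.

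The technical core, which I expect to be the main obstacle, is to promote this single path into infinitely many non-terminal occurrences of $\cnf$ \emph{after} the convergence-trial. I would proceed by induction along the prefix of the path, carrying the exact working memory $\set{\st_0,\ldots,\st_{j-1}}$ in the $j$-th configuration $\cnfX j$: the start configuration recurs because each start state opens infinitely many trials and, past convergence, every trial runs $\pol$; and whenever $\cnfX j$ (still non-branching, since the prefix states are distinct) recurs at non-terminal positions, fairness forces the transition $\cnfX j\jump{\pol(\st_j)\cs\st_{j+1}}\cnfX{j+1}$ to recur, its non-reward target $\cnfX{j+1}$ being again non-terminal because a trial ends only on reward. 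At the repeat index this yields infinitely many non-terminal occurrences of a branching configuration past the convergence-trial, contradicting the defining property of the convergence-trial and hence of the final policy. The delicate bookkeeping is to check that the accumulated working memories match exactly along the induction and that each intermediate target is genuinely non-terminal.
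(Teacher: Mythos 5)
Your proof is correct, but both halves follow genuinely different routes from the paper's. For sufficiency, the paper also inducts along a trial beginning with $\pol$, but it obtains the key non-branching property by contradiction: a revisit would make the set of already-visited states a nonempty subset of $\forward$ which an inner induction shows to be disjoint from every layer $\bl j$, contradicting $\forward\subseteq\backward$. Your well-founded descent on ranks of the $\backward$-layers replaces that double induction: along $\pol$ the rank strictly decreases, so visited states are automatically distinct and the trial reaches $\ground$; besides being shorter, this exposes that sufficiency only uses $\initstates\subseteq\backward$, and since the same descent applied to the layers $\fl j$ shows that $\initstates\subseteq\backward$ implies $\forward\subseteq\backward$, the characterization could equivalently be stated with $\initstates$ in place of $\forward$. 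For necessity, both arguments rest on the same kernel---a state outside $\backward$ has a $\pol$-successor outside $\backward$ (the paper's Step~1, phrased inside $V=\forward\setminus\backward$)---and both contradict the definition of the convergence-trial by exhibiting infinitely many non-terminal occurrences of a branching configuration after it; but they manufacture the revisit differently. The paper first proves, by induction on the layers $\fl j$ with a pigeonhole step per layer, that every state of $\forward$ recurs at non-terminal positions after the convergence-trial, then uses pigeonhole once more to seed a recurring configuration containing a $V$-state, and finally propagates exact successor configurations $\ssize V$ times until a $V$-state must re-enter the working memory. You instead construct an explicit lasso-shaped $\pol$-path in the task graph (prefix from the layers $\fl j$, cycle from the complement of $\backward$) and replay it with exact working memories starting from the trial-start configuration $(\stX 0,\pol,\emptyset)$, whose recurrence is immediate from the run conditions and the frozen policy; this eliminates both pigeonhole phases. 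The bookkeeping you flag as delicate does check out, for exactly the reason you give: after the convergence-trial the policy is fixed and trials start with empty memory, so each configuration along the replayed path is fully determined, and each propagated transition is non-reward, so its target occurrence is non-terminal.
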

\begin{proof}
We show in two separate parts that $\forward\subseteq\backward$ is \romI\ a sufficient and \romII\ a necessary condition for $\pol$ to be the final policy of run $\run$.

%- - - - - - - - - - - - - - - - - - - - - 
\proofpara{Part 1: Sufficient condition}
Let $\pol$ be a policy occurring in run $\run$ at the end of a trial.
Assume that $\forward\subseteq\backward$.
We show that $\pol$ is the final policy of $\run$.

Concretely, we show that any trial starting with policy $\pol$ will \romI\ use $\pol$ in all its configurations, including the terminal configuration; and, \romII, does not contain branching configurations at non-terminal positions. 
This implies that the first trial ending with $\pol$ is the convergence-trial, so $\pol$ is the final policy.

Let $\trial$ be a trial in $\run$ that begins with policy $\pol$.
We explicitly denote trial $\trial$ as the following finite chain of transitions:
\[
    \cnfX 1
        \jump{\actX 1\cs\stX 2}
    \ldots
        \jump{\actX{n-1}\cs\stX n}
    \cnfX n.
\]
For each $i\in\set{1,\ldots,n}$, we denote $\cnfX i=\cnftupX i$.
Let $\fl 1$, $\fl 2$, \ldots\ be the infinite sequence of sets previously defined for $\forward$.
We show by induction on $i=1,\ldots,n-1$ that
\begin{enumerate}[(a)]
    \item \label{enu:save-pol} $\polX i=\pol$;
    \item \label{enu:state-in-forward} $\stX i\in\fl i$;
    \item \label{enu:non-branching} $\cnfX i$ is non-branching.
\end{enumerate}
At the end of the induction, we can also see that $\polX n=\pol$: 
    first, we have $\polX n=\polX{n-1}$ because configuration $\cnfX{n-1}$ is non-branching by property~\refitem{enu:non-branching};%
        \footnote{Because configuration $\cnfX{n-1}$ is non-branching, we have $\polX{n}(\stX{n-1})=\actX{n-1}=\polX{n-1}(\stX{n-1})$, which, combined with $\polX{n}(\stB)=\polX{n-1}(\stB)$ for each $\stB\in\states\setminus\set{\stX{n-1}}$, gives $\polX{n}=\polX{n-1}$.}
    second, $\polX{n-1}=\pol$ by property~\refitem{enu:save-pol}.

\proofsubpara{Base case}
Let $i=1$.
For property~\refitem{enu:save-pol}, we have $\polX 1=\pol$ because the trial starts with policy $\pol$.
For property~\refitem{enu:state-in-forward}, we see that $\stX 1\in\initstates=\fl 1$.
For property~\refitem{enu:non-branching}, we know that $\cnfX 1$ is non-branching because the first configuration in a trial still has an empty working memory of visited states.

\proofsubpara{Inductive step}
Let $i\geq 2$, with $i\leq n-1$. Assume that the induction properties are satisfied for the configurations $\cnfX1$, \ldots, $\cnfX{i-1}$.
We now show that the properties are also satisfied for $\cnfX i$.

\begin{description}
    %--  --  --  --  --  
    \item[Property~\refitem{enu:save-pol}]
    By applying the induction hypothesis for property~\refitem{enu:non-branching} to $\cnfX{i-1}$, namely that $\cnfX{i-1}$ is non-branching, we know $\polX i=\polX{i-1}$.
    By subsequently applying the induction hypothesis for property~\refitem{enu:save-pol} to $\cnfX{i-1}$, namely $\polX{i-1}=\pol$, we know $\polX i=\pol$.

    %--  --  --  --  --  
    \item[Property~\refitem{enu:state-in-forward}]
    To start, we note that $\stX i\in\tr(\stX{i-1},\polX{i-1}(\stX{i-1}))$ because $\cnfX{i-1}$ is non-branching by the induction hypothesis for property~\refitem{enu:non-branching}.
    By subsequently applying the induction hypothesis for property~\refitem{enu:save-pol} to $\cnfX{i-1}$, namely $\polX{i-1}=\pol$, we know $\stX i\in\tr(\stX{i-1},\pol(\stX{i-1}))$.
    Moreover, since $i-1<i\leq n-1$, the transition $\cnfX{i-1}\jump{\actX{i-1}\cs\stX i}\cnfX i$, where $\actX{i-1}=\pol(\stX{i-1})$, is a non-reward transition. 
    Hence, $(\stX{i-1},\pol(\stX{i-1}))\notin\rewards$ and thus $\stX{i-1}\notin\ground$.    
    
    Lastly, by applying the induction hypothesis for property~\refitem{enu:state-in-forward} to $\cnfX{i-1}$, we overall obtain that $\stX{i-1}\in\fl{i-1}\setminus\ground$.
    Combined with $\stX i\in\tr(\stX{i-1},\pol(\stX{i-1}))$, we see that $\stX i\in\fl i$.

    %--  --  --  --  --  
    \item[Property~\refitem{enu:non-branching}]
    Towards a contradiction, suppose that configuration $\cnfX i$ is branching. This means that state $\stX i$ is revisited in $\cnfX i$.%
        \footnote{Recall that, by definition of branching configuration, we have $\stX i\in\wmX i$.}
    Let $V=\set{\stX 1,\ldots,\stX{i-1}}$. Note that $\stX i\in V$, which implies $V\neq\emptyset$.
    By applying the induction hypothesis for property~\refitem{enu:state-in-forward} to configurations $\cnfX 1$, \ldots, $\cnfX{i-1}$, we know that $V\subseteq\forward$.
    We now show that $V\cap\backward=\emptyset$, which would imply $\forward\not\subseteq\backward$; this is the desired contradiction.
    
    Let $\bl 1$, $\bl 2$, \ldots be the infinite sequence of sets defined for $\backward$ above. We show by induction on $j=1,2,\ldots$ that $V\cap\bl j=\emptyset$, which then overall implies $V\cap\backward=\emptyset$.
    \begin{itemize}
        \item Base case: $j=1$. By definition, $\bl 1=\ground$. 
        Let $\st\in V$.
        Let $k\in\set{1,\ldots,i-1}$ be the smallest index for which $\stX k=\st$, i.e., configuration $\cnfX k$ represents the first occurrence of $\st$ in the trial.
        By applying the outer induction hypothesis for properties \refitem{enu:save-pol} and \refitem{enu:non-branching} to $\cnfX k$, we know that $\actX k=\polX k(\stX k)=\pol(\st)$.
        But since $k\leq i-1<i\leq n-1$, we know that transition $\cnfX k\jump{\actX k\cs\stX{k+1}}\cnfX{k+1}$ is not a reward transition, implying $(\st,\pol(\st))\notin\rewards$. Hence, $\st\notin\ground$, and overall $V\cap\ground=\emptyset$.
        
        \item Inductive step. Let $j\geq 2$. 
        Assume $V\cap\bl{j-1}=\emptyset$.
        Towards a contradiction, suppose $V\cap\bl j\neq\emptyset$. Take some $\st\in V\cap\bl j$.
        If $\st\in\bl{j-1}$ then we would immediately have a contradiction with the induction hypothesis.
        Henceforth, suppose $\st\in\bl{j}\setminus\bl{j-1}$, which, by definition of $\bl{j}$, means $\tr(\st,\pol(\st))\subseteq\bl{j-1}$.
        We will now show that $V\cap\tr(\st,\pol(\st))\neq\emptyset$, which would give $V\cap\bl{j-1}\neq\emptyset$; this is the desired contradiction.
        
        Since $\st\in V$, there is some smallest $k\in\set{1,\ldots,i-1}$ such that $\stX k=\st$.
        Using a similar reasoning as in the base case ($j=1$), by applying the outer induction hypothesis for properties \refitem{enu:save-pol} and \refitem{enu:non-branching} to configuration $\cnfX k$, we can see that $\actX k=\pol(\stX k)$.
        This implies $\stX{k+1}\in\tr(\stX{k},\pol(\stX k))$.
        As a last step, we show that $\stX{k+1}\in V$, which gives $V\cap\tr(\stX{k},\pol(\stX k))\neq\emptyset$.
        We distinguish between the following cases:
        \begin{itemize}
            \item If $k\leq i-2$ then $k+1\leq i-1$, and surely $\stX{k+1}\in V$ by definition of $V$.            
            \item If $k=i-1$ then we know $\stX{k+1}=\stX i\in V$ because configuration $\cnfX i$ revisits state $\stX i$ (see above).
        \end{itemize}
    \end{itemize}
    
\end{description}

%- - - - - - - - - - - - - - - - - - - - - 
\proofpara{Part 2: Necessary condition}
Let $\pol$ be the final policy of $\run$. 
We show that $\forward\subseteq\backward$.
By definition of final policy, $\pol$ is the policy at the end of the convergence-trial, whose trial index we denote as $i$.
By definition of convergence-trial, after trial $i$ there are no more branching configurations at non-terminal positions.
Note in particular that the policy no longer changes after trial $i$.

Towards a contradiction, suppose that $\forward\not\subseteq\backward$.
Let $V=\forward\setminus\backward$. Note that $V\neq\emptyset$.
We show that there is a state $\st\in V$ that occurs at least once in a branching configuration at a non-terminal position after the convergence-trial $i$; this would be the desired contradiction.

We provide an outline of the rest of the proof. The reasoning proceeds in two steps. First, we show for each $\st\in V$ that $\tr(\st,\pol(\st))\cap V\neq\emptyset$. This means that if we are inside set $V$, we have the option to stay longer inside $V$ if we follow the policy $\pol$. Now, the second step of the reasoning is to show that we can stay arbitrarily long inside $V$ even after the convergence-trial $i$, causing at least one state of $V$ to occur in a branching configuration at a non-terminal position after trial $i$.

\proofsubpara{Step 1}
Let $\st\in V$. We show $\tr(\st,\pol(\st))\cap V\neq\emptyset$.
Towards a contradiction, suppose that $\tr(\st,\pol(\st))\cap V=\emptyset$.
Our strategy is to show that $\tr(\st,\pol(\st))\subseteq\backward$, which, by definition of $\backward$, implies that there is some index $j\in\natzero$ such that $\tr(\st,\pol(\st))\subseteq\bl j$. Therefore $\st\in\bl{j+1}\subseteq\backward$. But that is false because $\st\in V=\forward\setminus\backward$; this is the desired contradiction.

We are left to show that $\tr(\st,\pol(\st))\subseteq\backward$.
First, we show $\tr(\st,\pol(\st))\subseteq\forward$.
By definition, $\forward=\bigcup_{j\in\natzero}\fl j$.
Since $\st\in V\subseteq\forward$, there is some index $j\in\natzero$ such that $\st\in\fl j$.
Moreover, since $\st\notin\backward$ and $\ground\subseteq\backward$, we have $\st\notin\ground$.
Overall, $\st\in\fl j\setminus\ground$, which implies that $\tr(\st,\pol(\st))\subseteq\fl{j+1}\subseteq\forward$.

Now, we can complete the reasoning by combining $\tr(\st,\pol(\st))\subseteq\forward$ with our assumption $\tr(\st,\pol(\st))\cap V=\emptyset$, to see the following:
\begin{align*}
    \tr(\st,\pol(\st)) %& \subseteq \forward
                       & \subseteq \forward\setminus V\\
                       & = \forward\setminus(\forward\setminus\backward)\\
                       & = \forward\cap\backward\\
                       & \subseteq\backward.
\end{align*}

\proofsubpara{Step 2}
We now show that after convergence-trial $i$ there is at least one occurrence of a branching configuration at a non-terminal position.

We first show that each $\st\in\forward$ occurs infinitely often at non-terminal positions after trial $i$. Recall that $\forward=\bigcup_{j\in\natzero}\fl j$. 
We show by induction on $j=1,2,\ldots$ that states in $\fl j$ occur infinitely often at non-terminal positions after trial $i$.
\begin{itemize}
    \item Base case: $j=1$. By definition, $\fl 1=\initstates$.
    Because $\run$ is a valid run, each state of $\initstates$ is used in infinitely many trials as the start state, also after trial $i$ (see Section~\ref{sub:opsem}).%
        \footnote{We also recall here that $\run$ is assumed to converge, which, by definition of convergence, implies that $\run$ is an infinite sequence of finite trials.}
    Moreover, the first configuration of a trial is always at a non-terminal position because each trial contains at least one transition.
    
    \item Inductive step. Let $j\geq 2$. Assume that each state in $\fl{j-1}$ occurs infinitely often at non-terminal positions after trial $i$.
    Let $\st\in\fl j\setminus\fl{j-1}$.
    This implies that there is some $\stX{j-1}\in\fl{j-1}\setminus\ground$ for which $\st\in\tr(\stX{j-1},\pol(\stX{j-1}))$.
    
    By applying the induction hypothesis, we know that $\stX{j-1}$ occurs infinitely often at non-terminal positions after trial $i$. 
    Because there are only a finite number of possible configurations, there is a configuration $\cnf$ containing $\stX{j-1}$ that occurs infinitely often at non-terminal positions after trial $i$.
    Because trial $i$ is the convergence-trial, we can make two observations about configuration $\cnf$:         
        first, $\cnf$ contains the final policy $\pol$ because the policy no longer changes after trial $i$; and,
        second, $\cnf$ is non-branching because no branching configurations occur at non-terminal positions after trial $i$.
    These observations imply $(\pol(\stX{j-1}),\st)\in\opt\cnf$.
    
    Now, since $\cnf$ occurs infinitely often at non-terminal positions after trial $i$, the following transition $\trans$ occurs infinitely often after trial $i$ by fairness:
    \[
        \cnf
            \jump{\pol(\stX{j-1})\cs\st}
        \apply\cnf{\pol(\stX{j-1})}\st.
    \]
    Lastly, we know that $(\stX{j-1},\pol(\stX{j-1}))\notin\rewards$ because $\stX{j-1}\notin\ground$, so transition $\trans$ is a non-reward transition. 
    Therefore state $\st$ occurs infinitely often at non-terminal positions after trial $i$.
\end{itemize}

Now take some $\stX 1\in V$. Since $V\subseteq\forward$, we know from above that $\stX 1$ occurs infinitely often at non-terminal positions after trial $i$. Because there are only finitely many possible configurations, there is a configuration $\cnfX 1$ containing $\stX 1$ that occurs infinitely often at non-terminal positions after trial $i$.
After trial $i$, the policy no longer changes and only non-branching configurations may occur at non-terminal positions. So $\cnfX 1$ contains the final policy $\pol$ and is non-branching. 
Moreover, since $\stX 1\in V$, we know from Step~1 above that there is some $\stX 2\in\tr(\stX 1,\pol(\stX 1))\cap V$. 
Overall, we can see that $(\pol(\stX 1),\stX 2)\in\opt{\cnfX 1}$.
By fairness, the following transition $\trans_1$ occurs infinitely often after trial $i$:
\[
    \cnfX 1
        \jump{\pol(\stX 1)\cs\stX 2}
    \cnfX 2,
\]
where $\cnfX 2=\apply{\cnfX 1}{\pol(\stX 1)}{\stX 2}$. Because $\stX 1\in V$ we have $\stX 1\notin\ground$, so transition $\trans_1$ is a non-reward transition.%
    \footnote{If $\stX 1\in\ground$ then $\stX 1\in\backward$, which is false since $\stX 1\in V$.}
Therefore configuration $\cnfX 2$ occurs infinitely often at non-terminal positions after trial $i$.
Denoting $\cnfX 2=\cnftupX 2$, note that $\stX 1\in\wmX 2$.

We now make a similar reasoning for $\cnfX 2$ as we did for $\cnfX 1$.
Since $\stX 2\in V$, we know again from Step~1 that there is some $\stX 3\in\tr(\stX 2,\pol(\stX 2))\cap V$. Configuration $\cnfX 2$ contains the final policy $\pol$ because $\cnfX 2$ occurs after trial $i$, and $\cnfX 2$ is also non-branching because it occurs after trial $i$ at a non-terminal position.
Therefore $(\pol(\stX 2),\stX 3)\in\opt{\cnfX 2}$.
Now, since $\cnfX 2$ occurs infinitely often at non-terminal positions after trial $i$ (see above), the following transition occurs infinitely often after trial $i$ by fairness:
\[    
    \cnfX 2
        \jump{\pol(\stX 2)\cs\stX 3}
    \cnfX 3,
\]
where $\cnfX 3=\apply{\cnfX 2}{\pol(\stX 2)}{\stX 3}$. This transition is also non-rewarding because $\stX 2\in V$ implies $\stX 2\notin\ground$.
Denoting $\cnfX 3=\cnftupX 3$, note that $\set{\stX 1, \stX 2}\subseteq\wmX 3$. We emphasize that more states of $V$ are now marked as visited.

We can now complete the reasoning.
By following the final policy $\pol$ from a state in $V$, we can always stay inside $V$ without reaching reward. So, the above procedure can be repeated $\ssize V$ times in total, to show the existence of a configuration $\cnf=\cnftup$ with $\st\in V$ that occurs infinitely often at non-terminal positions after trial $i$, and where $\st\in\wm$.
Configuration $\cnf$ is therefore branching,
and thus the existence of $\cnf$ gives the desired contradiction, as explained at the beginning of Part~2 of the proof.%
    \footnote{We note the following for completeness. By repeatedly trying to stay inside $V$, we are not guaranteed to see all of $V$, but we still know that after at most $\ssize V$ transitions we have to revisit a state of $V$ at a non-terminal position (after trial $i$); that first revisit forms the desired contradiction.}
    \qedhere
\end{proof}

%---------------------------------------------------
\subsection{Necessary Conditions for Convergence}
\label{sub:necessary}

In Section~\ref{sub:sufficient}, we have seen that reducibility is a sufficient property for tasks to be \learnableF\ (Theorem~\ref{theo:algvisit}). 
In this subsection, we also show necessary properties for tasks to be \learnableF. This provides a first step towards characterizing the tasks that are \learnableF.
Thinking about such a characterization is useful because it allows us to better understand the tasks that are not \learnableF.

We first introduce some auxiliary concepts. Let $\task=\tasktup$ be a task. 
We say that there is a \emph{(simple) path} from a state $\st$ to a state $\st'$ if there is a sequence of actions $\actX 1,\ldots,\actX n$, with possibly $n=0$, and a sequence of states $\stX 1,\ldots,\stX{n+1}$ such that 
\begin{itemize}
    \item $\stX 1=\st$; 
    \item $\stX{n+1}=\st'$; 
    \item $\stX{i+1}\in\tr(\stX{i},\actX{i})$ for each $i\in\set{1,\ldots,n}$; 
    \item $(\stX{i},\actX{i})\notin\rewards$ for each $i\in\set{1,\ldots,n}$; and,
    \item $\stX{i}\neq\stX{j}$ for each $i,j\in\set{1,\ldots,n+1}$ with $i\neq j$.
\end{itemize}
We emphasize that the path does not contain reward transitions and does not repeat states.
We also denote a path as
\[
    \stX 1
        \jump{\actX 1}
    \ldots
        \jump{\actX n}
    \stX{n+1}.
\]
Note that there is always a path from a state to itself, namely, the empty path where $n=0$.
    
We say that a state $\st$ is \emph{reachable} if there is a path from a start state $\stX 0\in\initstates$ to $\st$.
Next, letting $\stateset\subseteq\states$, we say that a state $\st$ \emph{has a path to $\stateset$} if there is a path from $\st$ to a state $\st'\in\stateset$.
Note that if $\st$ has a path to $\stateset$, it is not guaranteed that the action sequence of that path always ends in $\stateset$ because the transition function $\tr$ is nondeterministic. 

We can now note the following necessary properties for tasks to be \learnableF:
\begin{proposition}
    \label{prop:necessary}
    Tasks $\task$ that are \learnableF\ satisfy the following properties:
    \begin{enumerate}[(a)]
        \item \label{enu:path} The reachable states have a path to $\goals\task$.
        \item \label{enu:start-reduce} The start states are reducible to $\goals\task$.        
    \end{enumerate}
\end{proposition}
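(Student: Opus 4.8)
The plan is to prove both properties by contraposition: assuming a property fails, I exhibit a fair run that does not converge, contradicting that $\task$ is \learnableF. For property~\refitem{enu:path}, suppose some reachable state $\st$ has no path to $\goals\task$. Since every transition out of a non-goal state is a non-reward transition, and since any walk of non-reward transitions from $\st$ to a goal could be shortened to a simple path, the absence of a path to $\goals\task$ means that the set $R$ of states reachable from $\st$ through non-reward transitions contains no goal state and is closed under $\tr$; hence once the current state lies in $R$, no reward transition can ever occur. Because $\st$ is reachable, there is a simple path from some start state to $\st$; I choose the initial policy to follow this path (assigning to each path state its path action), so the path is traversed without branching and without reward, landing in $R$. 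From there the single trial wanders forever inside $R$, scheduling the options of the (finitely many) recurring configurations in round-robin fashion as in the fair-run construction of the Remark in Section~\ref{sub:fairness}; this is possible precisely because no reward transition is ever forced. The result is a fair run consisting of one infinite trial, which does not converge.

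For property~\refitem{enu:start-reduce}, suppose some start state $\stX 0\in\initstates$ is not reducible, and set $U=\states\setminus\reduceX{\task}{\goals\task}$, so that $\stX 0\in U$. From the fixpoint property of the reducibility layers I first establish that $U$ is a \emph{trap}: every $\st\in U$ is a non-goal state and for every action $\act$ we have $\tr(\st,\act)\cap U\neq\emptyset$. Consequently there is an infinite non-reward walk inside $U$ starting at $\stX 0$, and by finiteness of $U$ this walk revisits some state; fixing such a state $\st^*\in U$ yields a path $P$ from $\stX 0$ to $\st^*$ and a cycle $C$ from $\st^*$ back to $\st^*$, both inside $U$. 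I may assume property~\refitem{enu:path} already holds (otherwise $\task$ fails to be \learnableF\ by the previous part and we are done); in particular, since $\st^*$ is reachable, it has a path to $\goals\task$, which lets me steer any trial that visits $\st^*$ onward to reward.

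I then build a fair run with infinitely many finite trials in which $\st^*$ never becomes stable. I take the initial policy to assign a rewarding action to every goal state, so that a trial terminates the first time it reaches a goal. Infinitely many designated \emph{special} trials execute $P$ followed by $C$, revisiting $\st^*$ and thereby producing a branching configuration of $\st^*$ at a non-terminal position (the trial is then continued along a path to a goal, where it terminates). The remaining freedom in the run is used to discharge the fairness obligations, exploring every option of every configuration that recurs infinitely often at non-terminal positions; whenever such an option sends the run toward an unintended state, property~\refitem{enu:path} guarantees a path from that (reachable) state to a goal, so the trial can still be completed and the run continued. Infinitely many special trials force infinitely many non-terminal branching occurrences of $\st^*$, so $\st^*$ does not become stable and the fair run does not converge.

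The main obstacle is this last fairness coordination for property~\refitem{enu:start-reduce}: fairness compels the exploration of options that leave the trap $U$ or end trials early, which naively conflicts with repeatedly re-creating the branching of $\st^*$. The resolution is twofold: keep all trials finite by exploiting property~\refitem{enu:path} (every reachable state can be steered to reward, so the run is never stuck in an unwanted infinite trial), and interleave the special trials with the fairness-discharging trials using the same dovetailing idea as the canonical fair-run construction, so that the recurrence of $\st^*$ and the exhaustive exploration of options are satisfied simultaneously.
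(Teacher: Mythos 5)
Your argument for property~\refitem{enu:path} is correct and is essentially the paper's own proof: fix a simple path from a start state to the state $\st$ that has no path to $\goals\task$, let the initial policy follow it, traverse it without branching, and observe that the trial can never terminate because the set of states reachable from $\st$ by non-reward transitions contains no goal state and is closed under all transitions; round-robin scheduling inside that trap keeps the run fair.

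Your argument for property~\refitem{enu:start-reduce} takes a genuinely different route from the paper, and it has a genuine gap. (The paper does no construction at all here: it takes an arbitrary fair run, which converges because $\task$ is \learnableF, applies Theorem~\ref{theo:final} to its final policy $\pol$ to obtain $\forward\subseteq\backward$, observes $\initstates\subseteq\forward$, and proves $\backward\subseteq\reduce$ by an easy induction showing $\bl j\subseteq\rl j$ for every $j$.) The gap is this: your special trials must execute the fixed path $P$ followed by the fixed cycle $C$, and since the first visit to a state in a trial is never branching, this requires the policy \emph{at the start of each special trial} to assign the $P$/$C$-actions to the states of $P$ and $C$. You can arrange that with the initial policy for the first special trial, but that trial itself destroys the arrangement: to terminate, it must eventually leave the cycle, and whichever state it exits from has its policy overwritten with the exit action, because the policy always records the last action taken at a state. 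In the next special trial, that state's first visit is non-branching, so the trial takes the exit action there and heads to the goal \emph{before} any revisit of $\st^*$ is created; no branching occurrence of $\st^*$ results. The fairness-discharging deviations corrupt the policy in exactly the same way. Repairing the policy at a cycle state requires a branching revisit of that state, i.e., another controlled traversal of a cycle through it, which presupposes the very policy you are trying to restore; the argument is circular, and your closing paragraph about keeping trials finite and dovetailing does not touch this difficulty.

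The construction can be repaired, but only by abandoning the fixed $P$ and $C$ and using the trap property of $U=\states\setminus\reduce$ dynamically: since every action applied to a state of $U$ has at least one successor inside $U$, a trial whose current state lies in $U$ can be held inside $U$ for $\ssize{U}+1$ steps \emph{whatever the current policy is} (at a non-branching configuration the forced policy action still has a successor in $U$; at a branching configuration you choose freely), and no reward transition can occur inside $U$. Since $\stX 0\in U$ must begin infinitely many trials, each such trial can be forced to contain a non-terminal branching occurrence of some state of $U$, and finiteness of $U$ yields a single state that never becomes stable. Even with this fix you still owe an argument that these trials can be terminated while discharging fairness (or that a resulting infinite trial is itself a separate non-convergence case). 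Given how delicate that bookkeeping is, the paper's structural route through Theorem~\ref{theo:final} is considerably shorter and is the one to prefer.
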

The property~\refitem{enu:path} is related to the assumption that reward can be reached from every state, see e.g.~\cite{bonet_2006}.
\begin{proof}
We show the two properties separately. Denote $\task=\tasktup$.

%- - - - - - - - - - - - - - - - - - - - - - - - - - - -
\proofpara{Property~\refitem{enu:path}}
Towards a contradiction, suppose that there is a reachable state $\st$ that has no path to $\goals\task$.
Note that $\st\notin\goals\task$ because otherwise the empty path from $\st$ to itself would be a path to $\goals\task$.
Because $\st$ is reachable, there is a path
\[
    \stX 1
        \jump{\actX 1}
    \ldots
        \jump{\actX n}
    \stX{n+1},
\]
where $\stX 1\in\initstates$ and $\stX{n+1}=\st$.
We can consider a policy $\pol$ where, for each $i\in\set{1,\ldots,n}$, we set $\pol(\stX{i})=\actX{i}$; the other state-action mappings may be arbitrary.%
    \footnote{Note that there are no conflicting assignments of actions to states because, by definition of path, each state $\stX i$ with $i\in\set{1,\ldots,n+1}$ occurs only once on the path.}
Below we will consider a fair run $\run$ whose first trial is given $\stX 1$ as start state and $\pol$ as the initial policy. 
First, we consider the following chain $\chain$:
\[
    \cnfX 1
        \jump{\actX 1\cs\stX 2}
    %\cnfX 2
    %    \jump{\actX 2\cs\stX 3}
    \ldots
        \jump{\actX n\cs\stX{n+1}}
    \cnfX{n+1},
\]
where for each $i\in\set{1,\ldots,n+1}$ we define $\cnfX i=(\stX i,\pol,\wmX i)$ where
    $\wmX i=\set{\stX 1,\ldots,\stX{i-1}}$.%
    \footnote{In this notation, we interpret $\set{\stX 1,\ldots,\stX 0}$ as $\emptyset$.}
Note that this chain is indeed valid: for each $i\in\set{1,\ldots,n}$, configuration $\cnfX i$ is (constructed to be) non-branching and therefore $(\pol(\stX i),\stX{i+1})=(\actX i,\stX{i+1})\in\opt{\cnfX{i}}$; this means that we do not modify the policy during the transition $\cnfX i\jump{\actX i\cs\stX{i+1}}\apply{\cnfX i}{\actX i}{\stX{i+1}}$, but we only mark $\stX i$ as visited, which gives $\cnfX{i+1}=\apply{\cnfX i}{\actX i}{\stX{i+1}}$.
Note that configuration $\cnfX{n+1}$ contains the state $\st$.

Consider a fair run $\run$ whose first trial starts with the chain $\chain$.
We show that the first trial never terminates; this is the desired contradiction because we had assumed that task $\task$ is \learnableF.
For the first trial to terminate, the trial must extend chain $\chain$ to a chain $\chain'$ that terminates with a reward transition. But then the existence of $\chain'$ would imply that there is a path from $\st$ to a state $\st'\in\goals\task$, which is false.

%- - - - - - - - - - - - - - - - - - - - - - - - - - - -
\proofpara{Property~\refitem{enu:start-reduce}}
We show that the start states are reducible to $\goals\task$.
Let $\run$ be a fair run on $\task$, which implies that $\run$ converges.
Because $\run$ converges, there is a final policy $\pol$, as defined in Section~\ref{sub:final}. 
Then by Theorem~\ref{theo:final}, we know that $\forward\subseteq\backward$.
We have $\initstates\subseteq\backward$ because $\initstates\subseteq\forward$ by definition of $\forward$.
Letting $\reduce$ be the set of states that are reducible to $\goals\task$, we show below that $\backward\subseteq\reduce$; this implies $\initstates\subseteq\reduce$, as desired.    

In Section~\ref{sub:task}, we have defined $\reduce=\bigcup_{i\in\natzero}\rl i$ where $\rl 1=\goals\task$ and for each $i\geq 2$,
\[
    \rl i = \rl{i-1} \cup \set{\st\in\states\mid\exists\act\in\actions\text{ with }\tr(\st,\act)\subseteq\rl{i-1}}.
\]
Also recall the definition $\backward=\bigcup_{i\in\natzero}\bl i$ from Section~\ref{sub:final}.
We show by induction on $i=1,2,\ldots$ that $\bl i\subseteq\rl i$.
\begin{itemize}
    \item Base case: $i=1$. By definition, $\bl 1=\ground=\set{\st\in\goals\task\mid(\st,\pol(\st))\in\rewards}$. Hence, $\bl 1\subseteq\goals\task=\rl 1$.
    
    \item Inductive step. 
    Let $i\geq 2$, and let us assume that $\bl{i-1}\subseteq\rl{i-1}$.
    Let $\st\in\bl i\setminus\bl{i-1}$, which, by definition of $\bl i$, implies $\tr(\st,\pol(\st))\subseteq\bl{i-1}$.
    By applying the induction hypothesis, we see $\tr(\st,\pol(\st))\subseteq\rl{i-1}$.
    This implies $\st\in\rl{i}$, and, overall, $\bl i\subseteq\rl i$.    
\end{itemize}

This completes the proof.
\qedhere
\end{proof}

%- - - - - - - - - - - - - - - - - - - - - - 
We recall from Theorem~\ref{theo:algvisit} that reducibility is a sufficient property for tasks to be \learnableF.
Note that reducibility implies the necessary properties in Proposition~\ref{prop:necessary} for tasks to be \learnableF.
We now discuss the gap between these sufficient and necessary properties.
Let $\task=\tasktup$ be a task.
Letting $\reduce$ be the set of states that can be reduced to $\goals\task$ (as in Section~\ref{sub:task}), we define 
\[
    \unstable=\states\setminus\reduce.
\]
We call $\unstable$ the \emph{unstable set} of $\task$. Note that for each state $\st\in\unstable$, for each action $\act\in\actions$, we always have $\tr(\st,\act)\cap\unstable\neq\emptyset$.%
    \footnote{Indeed, if $\tr(\st,\act)\cap\unstable=\emptyset$ then $\tr(\st,\act)\subseteq\reduce$ and subsequently $\st\in\reduce$, which is false.}
So, once we are inside $\unstable$, we can never reliably escape $\unstable$: escaping $\unstable$ depends on the nondeterministic choices regarding successor states.
This intuition has also appeared in Part~2 of the proof of Theorem~\ref{theo:final}, but in that proof we were focusing on just a single fixed action for each state, as assigned by the final policy at hand.

The following example illustrates how a nonempty unstable set could prevent convergence. 
In particular, the example illustrates that the necessary properties of Proposition~\ref{prop:necessary} are not sufficient for a task to be \learnableF.
\begin{example}
    \label{ex:unstable}    
    Consider the task $\task=\tasktup$ defined as follows:    
        $\states=\set{1, 2, 3}$;
        $\initstates=\set{1}$;
        $\actions=\set{a, b}$;
        $\rewards=\set{(3,a), (3,b)}$; and, regarding $\tr$, we define
        \begin{align*}
            \tr(1,a) &=\set{2},\\
            \tr(1,b) &=\set{3},\\
            \tr(2,a)=\tr(2,b)&=\set{2, 3},\\
            \tr(3,a)=\tr(3,b)&=\set{3}.
        \end{align*}
    The task $\task$ is visualized in Figure~\ref{fig:unstable}.
    Note that $\reduce=\set{1,3}$, giving 
    \[
    \unstable=\set{2}.
    \]
    Note in particular that $\tr(2,a)\cap\unstable\neq\emptyset$ and $\tr(2,b)\cap\unstable\neq\emptyset$.
    
    Task $\task$ satisfies the necessary properties of Proposition~\ref{prop:necessary}: 
        \romI\ the reachable states, which are all states in this case, have a path to $\goals\task$, and
        \romII\ the start state $1$ is reducible to $\goals\task$.         
    However, task $\task$ is not \learnableF, as we now illustrate. 
    Consider a trial $\trial$ of the following form: starting with an initial policy that assigns action $a$ to state $1$, we first go from state $1$ to state $2$; next, we stay at least two consecutive times in state $2$; and, lastly, we proceed to state $3$ and obtain reward there. 
    Because there are no revisits to state $1$ in trial $\trial$, state $1$ remains connected to action $a$. 
    We now see that we can make a fair run $\run$ by repeating trials of the form of $\trial$: state $1$ is never revisited and stays connected to action $a$, and we keep revisiting state $2$.    
    This way, there are infinitely many branching configurations containing state $2$ at non-terminal positions.
    So, run $\run$ does not converge.
    \qed
\end{example}

\begin{figure}
    \begin{center}
    \includegraphics[width=0.5\textwidth]{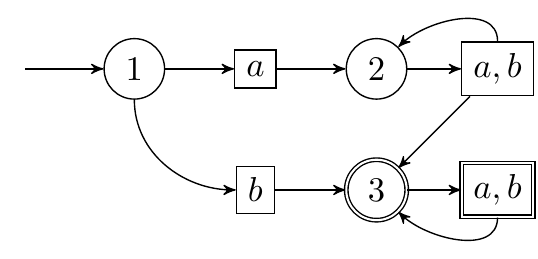}
    \end{center}
    \caption{The task from Example~\ref{ex:unstable}, with unstable set $\set{2}$. The graphical notation is explained in Figure~\ref{fig:reduce}.}
    \label{fig:unstable}
\end{figure}

By looking at Example~\ref{ex:unstable}, it seems that runs would somehow have to learn to avoid entering the set $\unstable$.
First, we define
    \[    
    \border=\set{\st\in\reduce\mid \exists\act\in\actions\text{ with }\tr(\st,\act)\cap\unstable\neq\emptyset}.
    \]
The set $\border$ contains those states that could enter $\unstable$; we call such states \emph{border states}.
Let $\st\in\border$, and let $V$ be the subset of $\unstable$ that is reachable from $\st$. Now, one idea could be to demand for each $\stX 1\in V$ that there is some $(\stX 2,\act)\in V\times\actions$ such that $\stX 2\in V$ is reachable from $\stX 1$ and $\st\in\tr(\stX 2,\act)$, i.e., there is some escape option to return from $V$ back to border state $\st$. 
This way, we can revisit that precise border state $\st$ in the same trial, so that under fairness we can choose a new action for $\st$ to avoid a future entrance into $\unstable$.
The possibility to revisit border states in the same trial is exactly what is missing from Example~\ref{ex:unstable}.
The characterization of tasks that are \learnableF\ might be a class of tasks that satisfy the necessary properties of Proposition~\ref{prop:necessary} and that additionally specify assumptions on the transition function to ensure the possibility of revisits to border states.
Illuminating the role of unstable sets in learning is an interesting avenue for further work (see Section~\ref{sec:conclusion}).

%==========================================
\section{Examples}
\label{sec:examples}

Theorem~\ref{theo:algvisit} tells us that all reducible tasks are \learnableF, and Theorem~\ref{theo:final} allows us to detect when the final policy has been formed.
To illustrate these theorems, we now consider two examples of tasks that are reducible, in Section~\ref{sub:grid} and Section~\ref{sub:chain}, respectively. 
Our aim is not to show practical efficiency of the cycle-detection learning algorithm, but rather to illustrate the theoretical insights.
Indeed, because the considered examples are reducible, they are \learnableF\ by Theorem~\ref{theo:algvisit}. 
Next, aided by Theorem~\ref{theo:final}, we can experimentally measure how long it takes for the learning process to convergence.
In Section~\ref{sec:conclusion}, for further work, we identify aspects where the learning algorithm could be improved to become more suitable for practice.

%----------------------------------------
\subsection{Grid Navigation Tasks}
\label{sub:grid}

Our first example is a navigation task in a grid world~\cite{sutton-barto_1998,potjans_2011}. 
In such a task, it is intuitive to imagine how paths are formed and what they mean.
Below we formalize a possible version of such a grid task.

The grid is represented along two axes, the $X$- and $Y$-axis.
At any time, the agent is inside only one grid cell $(x,y)$.
We let the state set $\states$ be a subset of $\nat\times\nat$.    
Let $\gridgoals\subseteq\states$ be a subset of cells, called \emph{goal cells}.
The agent could apply the following actions to each grid cell: 
    finish, 
    left, right, up, down, 
    left-up, left-down, right-up, and right-down.
Let $\gridactions$ denote the set containing these actions.
The finish action is a non-movement action that gives immediate reward when applied to a goal cell. 
The finish action intuitively says that the agent believes it has reached a goal cell and claims to be finished. Activating the finish action in any non-goal cell will just leave the agent in that cell without reward.
The actions other than finish will be referred to as movement actions.

For every movement action $\act$, there is noise from the environment. We formalize this with a noise offset function that maps each movement action to the possible relative movements that it can cause. For example, $\offsets{\text{left}}=\set{(-1,0), (-1,-1),\allowbreak (-1,1)}$, and $\offsets{\text{left-up}}=\set{(-1,1), (-1,0),\allowbreak (0,1)}$. 
Intuitively, noise adds one left-rotating option and one right-rotating option to the main intended direction. The offsets of the other movement actions can be similarly defined (see Appendix~\ref{app:examples}). 
For uniformity we define $\offsets{\text{finish}}=\set{(0,0)}$.

For a cell $(x,y)\in\nat\times\nat$ and an action $\act\in\gridactions$ we define the set $\move{(x,y)}\act$ of child-cells that result from the application of the offsets of $\act$ to $(x,y)$. Formally, we define
\[
    \move{(x,y)}\act = \set{(x + u,\, y + v) \mid (u,v)\in\offsets\act}.
\]

Now, we say that a task $\task=\tasktup$ is a \emph{grid navigation task} if
    $\states\subseteq\nat\times\nat$,
    $\initstates\subseteq\states$, 
    $\actions=\gridactions$,
    there exists some nonempty subset $\gridgoals\subseteq\states$ such that 
    \[
        \rewards=\set{(\st,\text{finish})\mid \st\in\gridgoals},
    \]
    and for each $(\st,\act)\in\states\times\actions$ we have
    \[
        \tr(\st,\act) = \begin{cases}
            \move\st\act & \text{if }\move\st\act\subseteq\states \\
            \set{\st} & \text{otherwise.}
            \end{cases}
    \]
Note that we only perform a movement action if the set of child-cells is fully contained in the set of possible grid cells; otherwise the agent remains stationary.%
    \footnote{This restriction results in policies that have sufficiently intuitive visualizations; see Figures~\ref{fig:box}, \ref{fig:corridor_template}, and \ref{fig:forward_backward}.}
    
The assumption of reducibility can additionally be imposed on grid navigation tasks.
In Figure~\ref{fig:box}, we visualize a grid navigation task that, for illustrative purposes, is only partially reducible.

\begin{figure}
    \begin{center}
    \includegraphics[width=0.5\textwidth]{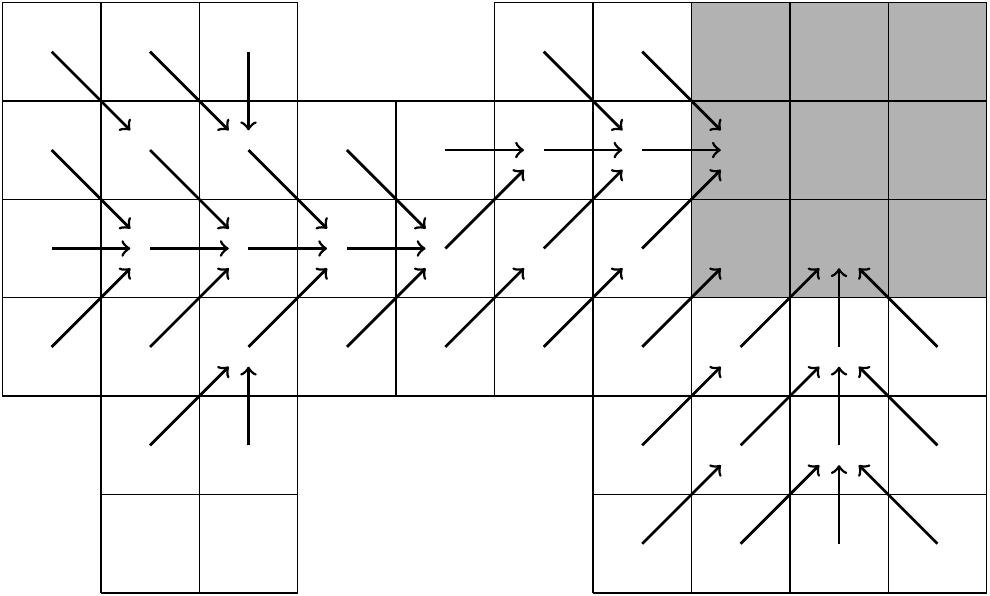}
    \end{center}
    \caption{
    A grid navigation task.    
    The shaded cells are goal cells, where the finish action should be executed to obtain reward.
    Reducibility of cells is illustrated by the arrows: an arrow leaving a cell represents an action that the cell could choose to come closer to a goal cell.
    We note that each arrow displays only the main intended direction of the action, and that in principle the noise offsets could occur during execution.
    In this example, not all cells are reducible. 
    For the non-reducible cells, for each applied movement action $\act$, either \romI\ function $\tr$ defines $\act$ to stay stationary for the cell, or \romII\ $\act$ could lead to another non-reducible cell.
    Naturally, if all non-reducible cells are removed then the resulting task is reducible.
    }
    \label{fig:box}
\end{figure}

%----------------------------------
We now discuss two simulation experiments that we have performed on such grid navigation tasks.
Some details of the experiments can be found in Appendix~\ref{app:examples}.
Let $\natzero$ denote the set of natural numbers without zero.

\paragraph{Convergence-Trial Index}
Our first experiment measures the convergence-trial index. First we discuss the general setup of the experiment.
We recall from Section~\ref{sub:final} that the convergence-trial index of a fair run is the first trial index where the final policy occurs at the end.
For a given task $\task$, we can simulate fair runs, and we stop each run when the final policy is detected through the characterization of Theorem~\ref{theo:final}; we remember the convergence-trial index. 
Each run is started with a random policy, where each state is assigned a random action by uniformly sampling the available actions. 
Fairness depends on the mechanism for choosing among successor states, which is also based on a uniform distribution (see Appendix~\ref{app:examples}). Interestingly, we do not have to simulate infinite runs because we always eventually detect the final policy; when we stop the simulation after finite time, the finite run may be thought of as a prefix of an infinite fair run.%
    \footnote{Of course, it is impossible to simulate an infinite run in practice.}

In principle, there is no upper bound on the convergence-trial index in the simulation. 
Fortunately, our experiments demonstrate that for the studied reducible tasks, there appears to be a number $i\in\natzero$ such that the large majority of simulated runs has a convergence-trial index below $i$. Possibly, there are outliers with a very large convergence-trial index, although such outliers are relatively few in number. 
We can exclude outliers from a list of numbers by considering a $p$-quantile with $0 < p < 1$.%
    \footnote{In Appendix~\ref{app:examples}, we give the precise definition of $p$-quantile that we have used in our analysis.}

We wanted to see if the convergence-trial index depends on the distance between the start cells and the goal cells. For this purpose, we have considered grid tasks with the form of a corridor, as shown in Figure~\ref{fig:corridor_template}. There is only one start cell. The parameter that we can change is the distance from the start cell to the patch of goal cells. All other aspects remain fixed, including the width of the corridor and the number and the location of the goal cells. The arrows in Figure~\ref{fig:corridor_template} show reducibility of this kind of task.
For a number $l\in\natzero$, we define the $l$-corridor as the corridor navigation task where the distance between the start cell and the goal cells is equal to $l$.

Now, for some lengths $l$, we have simulated runs on the $l$-corridor.
For each $l$-corridor separately, we have simulated $400$ runs and we have computed the $0.9$-quantile for the measured convergence-trial indexes; this gives an empirical upper bound on the convergence-trial index for the $l$-corridor.
Figure~\ref{fig:grid_convergence} shows the quantiles plotted against the corridor lengths.
We observe that longer corridors require more time to learn. This is probably because at each application of a movement action to a cell, there could be multiple successor cells due to nondeterminism. Intuitively, the nondeterminism causes a drift away from any straight path to the goal cells.
The policy has to learn a suitable action for each of the cells encountered due to drift.
So, when the corridor becomes longer, more cells are encountered due to nondeterminism, and therefore the learning process takes more time to learn a suitable action for each of the encountered cells.
Figure~\ref{fig:grid_convergence} suggests an almost linear relationship between corridor length and the empirical upper bound on the convergence-trial index based on the 0.9-quantile. In Section~\ref{sub:chain}, we will see another example, where the relationship is not linear.

\begin{figure}
    \begin{center}
    \includegraphics[width=0.4\textwidth]{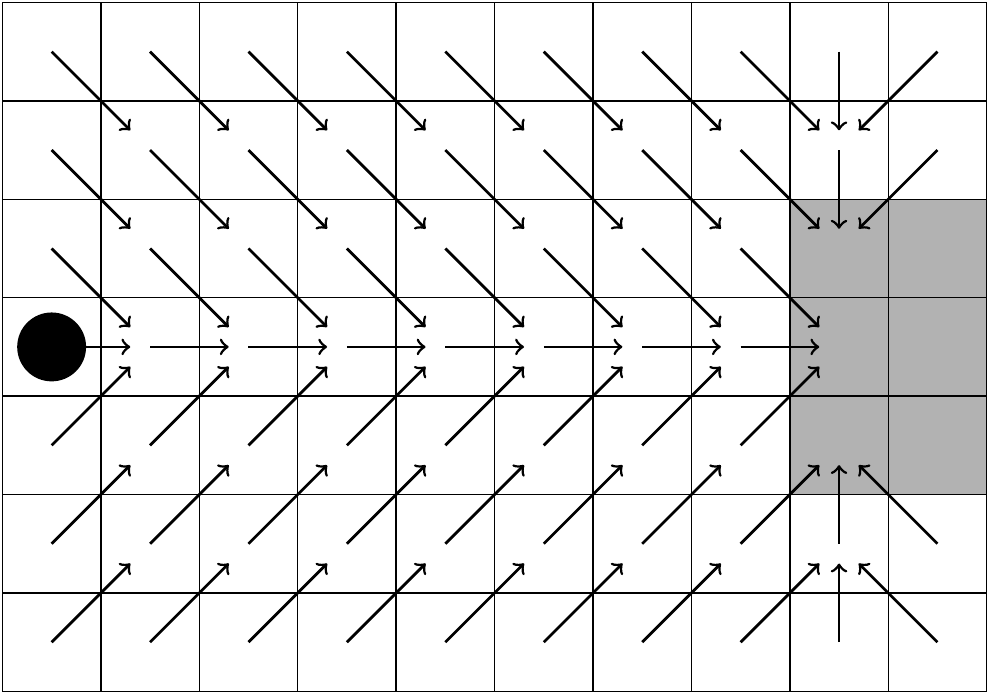}
    \end{center}
    \caption{A corridor template for grid navigation tasks. The single start cell is marked with a black circle. The goal cells are shaded in gray. The parameterizable part is the horizontal distance between the start cell and the patch of goal cells. For completeness, the arrows represent a possible policy that illustrates reducibility of this kind of task. Although not shown, we recall that the finish action should be executed on the goal cells. The length of the depicted corridor is $7$, which is the distance between the start cell and the patch of goal cells.}
    \label{fig:corridor_template}
\end{figure}

\begin{figure}
\begin{center}
\begin{subfigure}[t]{.45\textwidth}
    \includegraphics[width=1\textwidth]{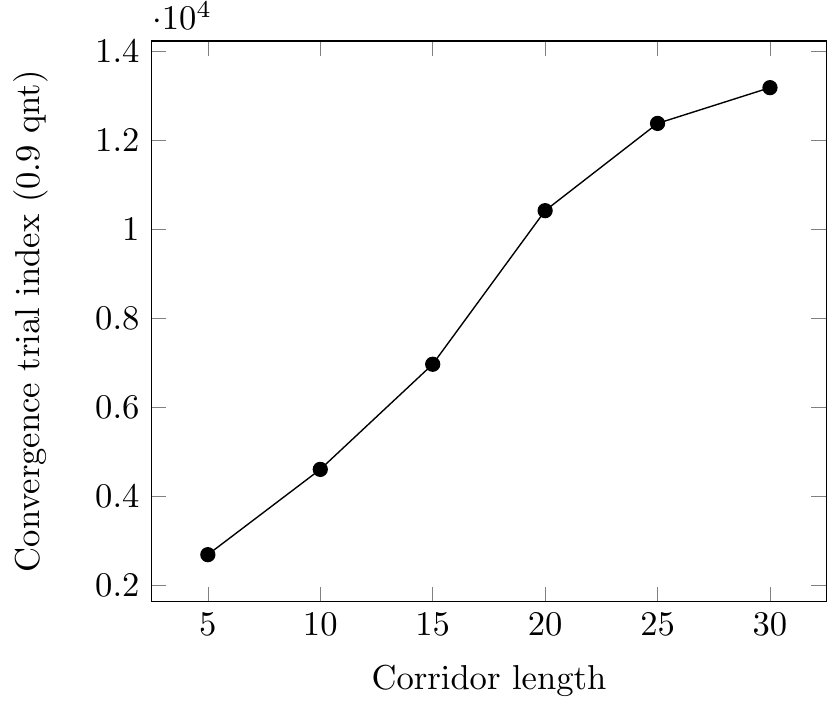}
    \subcaption{Convergence-trial index related to corridor length.}
    \label{fig:grid_convergence}    
\end{subfigure}
\quad
\begin{subfigure}[t]{.45\textwidth}
    \includegraphics[width=1\textwidth]{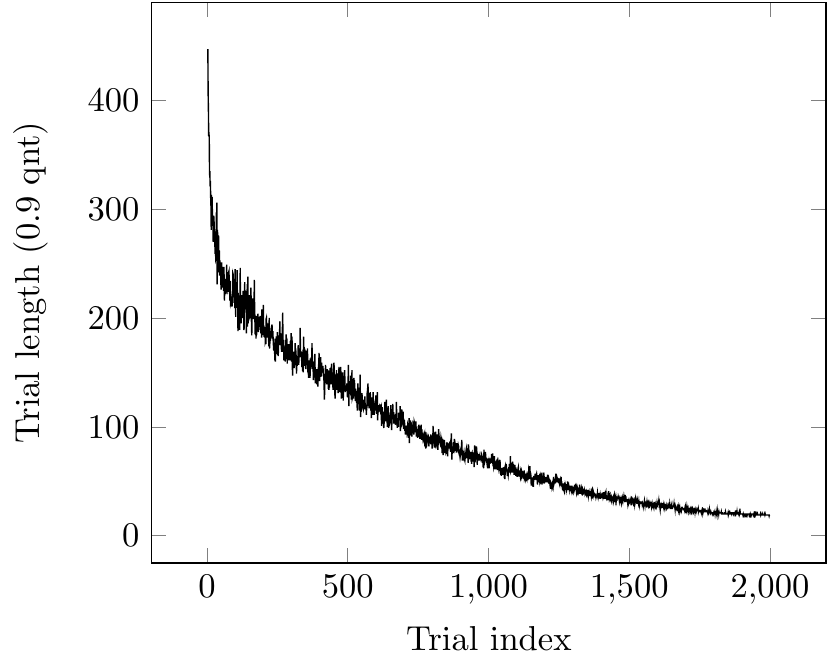}
    \caption{Trial length related to trial index for the corridor task with length $10$. We have excluded the first two trials to better show the shape of the curve.}
    \label{fig:grid_trial_length}
\end{subfigure}
\end{center}
\caption{Plots for the corridor grid navigation tasks.}
\end{figure}

\paragraph{Trial Length}
For a fixed grid navigation task, we also wanted to test if trial length decreases as the run progresses. 
A decreasing trial length would demonstrate that the learning algorithm is gradually refining the policy to go more directly to reward from the start states, avoiding cycles in the state space.

For the fixed corridor length of $l=10$, we have simulated the first 2000 trials of 1000 runs; and, for each trial we have measured its length as the number of transitions. 
This gives a data matrix where each cell $(i,j)$, with run index $i$ and trial index $j$, contains the length of trial $j$ in the simulated run $i$. 
For each trial index we have computed the $0.9$-quantile of its length measurements in all runs. By plotting the resulting empirical upper bound on trial length against the trial index, we arrive at Figure~\ref{fig:grid_trial_length}.
We can see that trial length generally decreases as the run progresses.
A similar experimental result is also reported by \citet{potjans_2011}, in the context of neurons learning a grid navigation task.

\paragraph{Visualizing Forward and Backward}

We recall from Theorem~\ref{theo:final} that, within the context of a specific task, the final policy $\pol$ in a converging run satisfies the inclusion $\forward\subseteq\backward$.
For the corridor in Figure~\ref{fig:corridor_template}, for one simulated run, we visualize the forward state set and the backward state set of the final policy in Figure~\ref{fig:forward_backward}. 
Note that the forward state set is indeed included in the backward state set.
Interestingly, in this case, the simulated run has initially learned to perform the finish action in some goal cells, as witnessed by the backward state set, but eventually some of those goal cells are no longer reached from the start state, as witnessed by the forward state set.

\begin{figure}
\begin{center}
\begin{subfigure}[t]{.45\textwidth}    
    \includegraphics[width=1\textwidth]{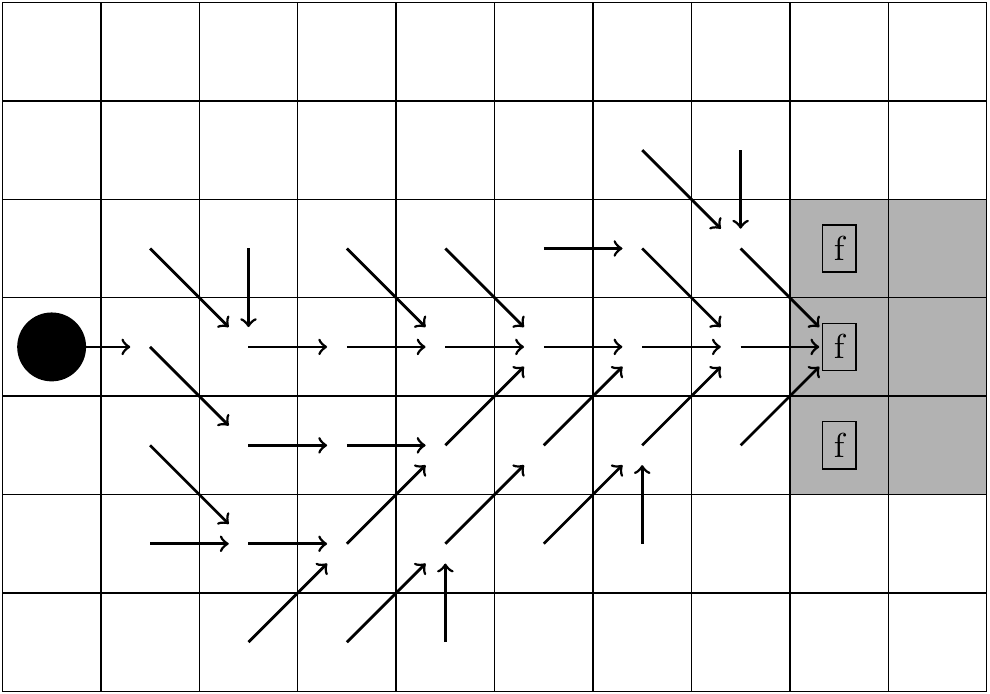}
    \subcaption{Forward state set.}    
\end{subfigure}
\quad
\begin{subfigure}[t]{.45\textwidth}
    \includegraphics[width=1\textwidth]{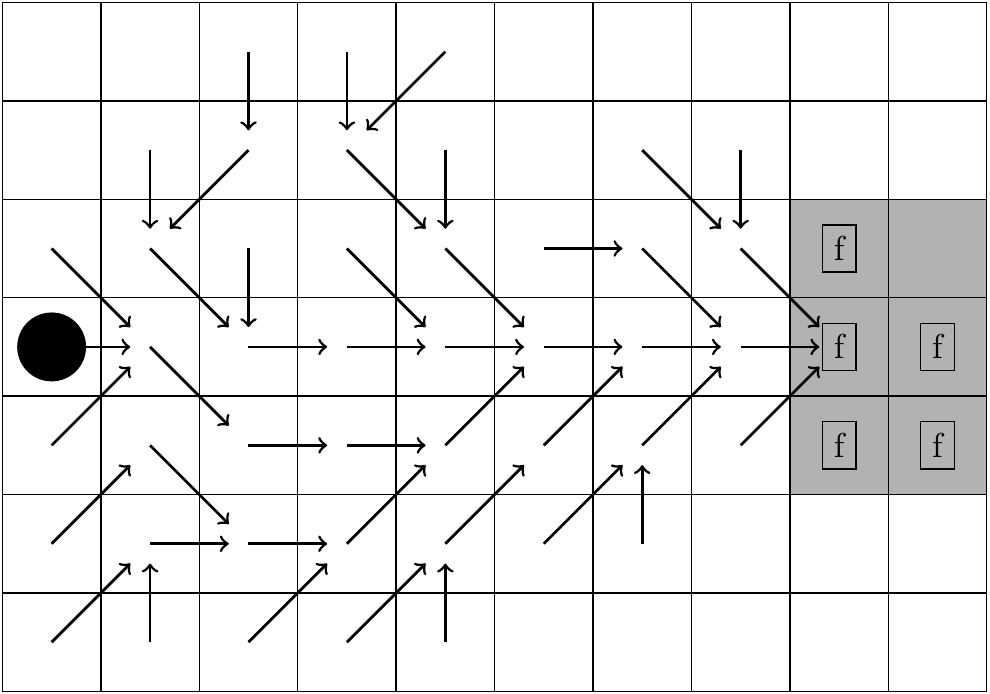}
    \caption{Backward state set.}    
\end{subfigure}
\end{center}
\caption{Visualization of the forward state set and backward state set, from Section~\ref{sub:final}, of the final policy in one simulated run of the corridor navigation task shown in Figure~\ref{fig:corridor_template}. %
The states belonging to a set are marked with their action in the final policy; the empty grid cells are not part of the set. Each arrow shows only the main intended direction of the action; we do not show the left-rotating noise option and the right-rotating noise option. %
Goal cells that are assigned the finish action by the final policy are marked with the ``f'' sign.}
\label{fig:forward_backward}
\end{figure}

%-------------------------------------
\subsection{Chain Tasks}
\label{sub:chain}

In addition to the concrete grid tasks of Section~\ref{sub:grid}, we have also considered a slightly more abstract form of task, that we call chain task.
The general form of a chain task is shown in Figure~\ref{fig:chain}.
The parameter $n$ tells us how long the chain is; the states of the chain are, in order, $1,\ldots,n$ and one final state $n+1$.
To obtain reward from the start state, we should in the worst case perform all actions $a_1,\ldots,a_n$, in sequence, finished by one arbitrary action. For each $i\in\set{1,\ldots,n}$, the action $a_i$ should be applied to state $i$.
But there is forward nondeterminism that, for each pair $(i,a_i)$ could send us to an arbitrary state later in the chain, closer to state $n+1$.
Also, there are backward deterministic transitions that take us back to the start state whenever we apply the wrong action to a state.
Formally, for a fixed value of $n\in\natzero$, we obtain a graph $\task_n=\tasktup$, defined as follows: 
    $\states=\set{1,\ldots,n,n+1}$, 
    $\initstates=\set{1}$,
    $\actions=\set{a_1,\ldots,a_n}$,
    $\rewards=\set{(n+1,a)\mid a\in\actions}$, 
    and regarding $\tr$, we define
    \begin{itemize}
        \item for each $i\in\states$, 
            \begin{align*}
                \tr(i,a_i) &= \set{i+1,\ldots,n+1},\\        \tr(i,b) &= \set{1}\text{ for each }b\in\actions\setminus\set{a_i}\text{; and,}
            \end{align*}
        \item for the goal state $n+1$, we define $\tr(n+1,a)=\set{n+1}$ for each $a\in\actions$.
    \end{itemize}
Note that for each $n\in\natzero$, the task $\task_n$ is reducible: conceptually, the reducibility iterations first assign action $a_n$ to state $n$, then action $a_{n-1}$ to state $n-1$, and so on, until state $1$ is assigned action $a_1$.

\begin{figure}
    \begin{center}
    \includegraphics[width=0.9\textwidth]{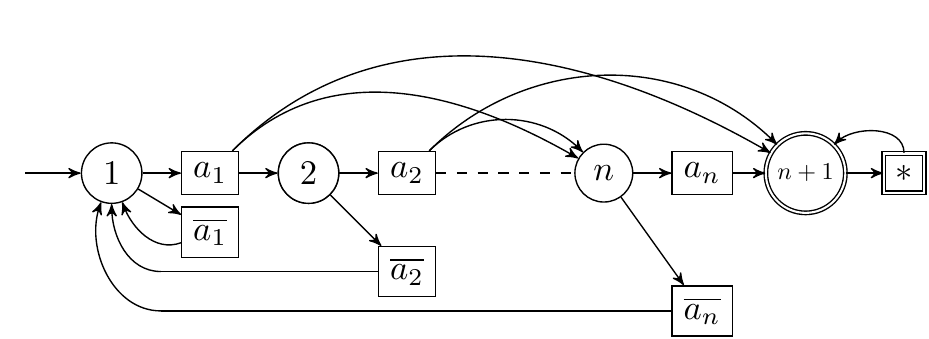}
    \end{center}
    \caption{Template for chain tasks. Regarding notation, the symbol $*$ denotes the set of all actions $\set{a_1,\ldots,a_n}$, and the symbol $\overline{a_i}$ with $i\in\set{1,\ldots,n}$ stands for the set of all actions except action $a_i$. The graphical notation of tasks is explained in Figure~\ref{fig:reduce}.}
    \label{fig:chain}
\end{figure}

On chain tasks, we have performed simulation experiments that are similar to the experiments on the grid navigation tasks in Section~\ref{sub:grid}. These experiments are discussed next.

\paragraph{Convergence-trial Index}
Using the same experimental procedure as for grid tasks, we have simulated runs for some chain lengths. For each chain length separately, we have simulated 400 runs, and we have computed the $0.9$-quantile of the measured convergence-trial indexes.
By plotting the resulting empirical upper bound on the convergence-trial index against the chain length, we arrive at Figure~\ref{fig:chain_convergence}.
We see that the convergence-trial index rises faster than linear in terms of the chain length; this is in contrast to Figure~\ref{fig:grid_convergence} for grid corridors.
One possible explanation, is that the forward nondeterminism on the chain causes each state to be visited less frequently in a simulated run. 
For a fixed length $n$, the effect is that a state $\st\in\set{2,\ldots,n}$ could stay connected longer to a bad action, leading back to the start state $1$. Of course, at the end of each trial, the start state should be connected to action $a_1$, because otherwise no progress could have been made; but the other states could in principle have any action.
So, we might not yet encounter the final policy for a long time, as recognizable through the syntactic characterization of Theorem~\ref{theo:final}.

\paragraph{Trial Length}
Using the same experimental procedure as for grid tasks, for the fixed chain length of $n=10$, we have simulated the first 2000 trials of 1000 runs, and we have computed the $0.9$-quantile on trial length as explained for the grid corridor experiment. 
By plotting the resulting empirical upper bound on trial length against the trial index, we arrive at Figure~\ref{fig:chain_trial_length}. Again, this figure suggests that the learning algorithm is able to gradually improve the policy over trials.

\paragraph{Form of the Final Policy}
We describe the form of the final policy, rather than visualizing it, because the form is very restricted.
Consider the final policy $\pol$ in a simulated run of the chain task with length $n\in\natzero$. 
In the simulation, we know that each trial should have ended with action $a_1$ assigned to state $1$ because otherwise the state $n+1$ could not have been reached.
This property also applies to the convergence-trial, so $\pol(1)=a_1$.
Hence, $\forward$ is the set of all states due to the forward nondeterminism along the chain. 
We recall from Theorem~\ref{theo:final} that the final policy satisfies $\forward\subseteq\backward$.
Therefore, $\backward$ is also the set of all states.
We can now see that the final policy $\pol$ has to satisfy $\pol(i)=a_i$ for all $i\in\set{1,\ldots,n}$. 
Towards a contradiction, let $i$ be the largest state number for which $\pol(i)\neq a_i$. Then, using the iterations for computing $\backward$, we can see that $\set{i+1,\ldots,n+1}\subseteq\backward$ but $\set{1,\ldots,i}\cap\backward=\emptyset$.%
    \footnote{Here, the computation of $\backward$ is started with the set $\ground=\set{n+1}$.}
    This would be the desired contradiction, because $\backward$ is the set of all states.

\begin{figure}
\begin{center}
\begin{subfigure}[t]{0.45\textwidth}    
    \includegraphics[width=1\textwidth]{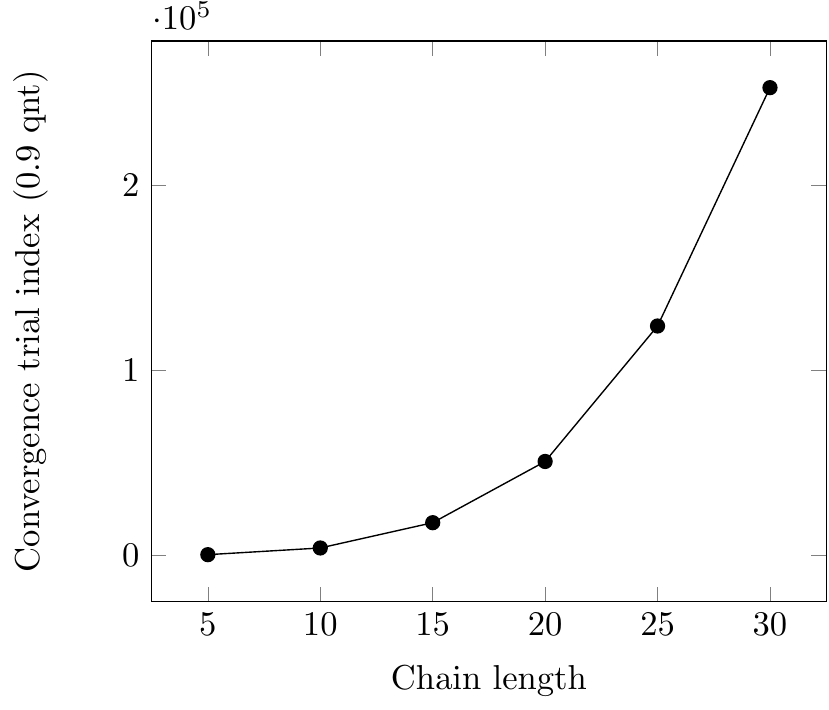}    
    \caption{Convergence-trial index related to chain length.}
    \label{fig:chain_convergence}
\end{subfigure}
\quad
\begin{subfigure}[t]{.45\textwidth}    
    \includegraphics[width=1\textwidth]{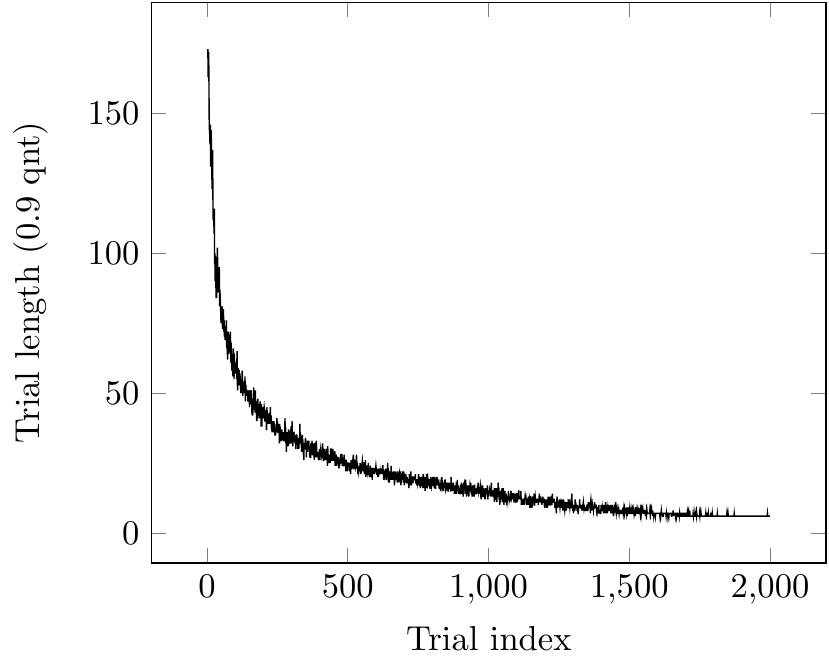}    
    \caption{Trial length related to trial index for the chain task with length $n=10$. We have excluded the first two trials to better show the shape of the curve.}
    \label{fig:chain_trial_length}
\end{subfigure}
\end{center}
\caption{Plots for the chain tasks.}
\end{figure}

%===================================================
\section{Conclusion and Further Work}
\label{sec:conclusion}

We have studied the fascinating idea of reinforcement learning in a non-numeric framework, where the focus lies on the interaction between the graph structure of the task and a learning algorithm.
We have studied the graph property of reducibility, that implies the existence of a policy that makes steady progress towards reward despite nondeterminism in the task. Interestingly, reducibility, combined with a natural fairness assumption, enables our simple learning algorithm to learn the task.
We have also characterized the final policy for converging runs, which allows the precise detection of the convergence-trial in simulations.
We now discuss some avenues for further work.

\paragraph{Characterizing Learnable Tasks}
We have seen a sufficient property (Theorem~\ref{theo:algvisit}) and necessary properties (Proposition~\ref{prop:necessary}) for tasks to be \learnableF. 
The gap between the sufficient and necessary properties seems strongly related to the unstable set of a task (see Section~\ref{sub:necessary}).
Perhaps it is possible to characterize the tasks that are \learnableF\ by imposing some additional constraints on the manner by which the unstable set is connected to the reducible states, in addition to the already identified necessary properties.

\paragraph{Time Before Convergence}
Related to Remark~\ref{remark:theo-algvisit} and to the simulations in Section~\ref{sec:examples}, one could try to theoretically provide an upper bound on the convergence-trial index, for some class of tasks. 
We could also make assumptions regarding the probability distributions underlying the random actions proposed for a state and the choice of successor states when applying an action. 
For a given task, the result could be a probability distribution on the convergence-trial index,  or on the total number of transitions before convergence.

\paragraph{Fading Eligibility Traces}
In some models of biologically plausible learning, the activation between any pair of connected neurons is represented by an eligibility trace~\cite{sutton-barto_1998,fremaux_2013,gerstner_book_2014}. 
When obtaining reward, the value of this trace is applied to the synaptic weight between the neurons. In realistic scenarios, the traces fade, with the advantage that a simulation or practical setup does not have to keep remembering all the information of past states before obtaining reward. 
The current article may be viewed as studying eligibility traces that are non-fading, because the working memory of the cycle-detection learning algorithm is effectively stored until the end of the trial.
It appears interesting to let the working memory fade, perhaps by modeling the working memory as a first-in-first-out queue, where a newly entering state would remove the oldest state from the queue once a size limit has been reached.

\paragraph{Negative Feedback}
In this article, any path through the state space from a given start state to reward is good. However, in some applications we want to avoid certain states. For example, in a navigation task, an organism might want to reach its nest from a certain starting location, but on the way to the nest the organism should avoid hazardous locations like pits or swamps. 
It appears interesting to formally investigate such cases.
Information about hazards could be incorporated by extending the framework of this article with an explicit set $\mathit{hazards}$ in tasks, which contains the state-action pairs which should be avoided; this is the opposite of the set $\rewards$.
The framework could further be extended to differentiate between trials that terminate with reward and trials that terminate with a hazard.
A run could be said to convergence if eventually all trials terminate with reward, and all states eventually become stable.

\paragraph{Incomplete Information and Generalization}
The framework studied in this article provides complete information to the learning algorithm because each individual state can be mapped to its own action.
In real-world applications, such as robot navigation~\cite{thrun_book_2005}, the agent can only work with limited sensory information available in each time step. In that case, the agent should first build concepts for states in order to differentiate them. These concepts should be made by remembering sensory information over time, and in general multiple states will remain grouped together under the same concept because sensory information is not sufficiently accurate to differentiate between all states.
This issue was also raised as an item for future work by \citet{fremaux_2013}, who initially also have considered a framework in which complete information is available to the learning agent.
There is ongoing work on partially observable tasks, see e.g.~\cite{littman_1995,chatterjee_2015}.

Building concepts is related to the problem of generalization~\cite{sutton-barto_1998} because in real-world tasks there might be too many states to store in the policy. It would be useful to collect states in conceptual groups and then assign an action to each group.

It appears interesting to formalize incomplete information and generalization in an extended version of the current framework, and to investigate sufficient and necessary properties for convergence.

%===================================================
\bibliographystyle{apalike} % basic style, author-year citations

\newpage
\appendix

\section*{Appendix}

\addtocontents{toc}{\protect\setcounter{tocdepth}{0}}

%==========================================
\section{Examples}
\label{app:examples}

This Section contains additional details for the example tasks discussed in Section~\ref{sec:examples}.

%------------------------------------
\subsection{Grid Action Offsets}
For our formalization of grid tasks, Table~\ref{table:grid-offsets} gives the offsets for each action.

\begin{table}
    \caption{Grid action offsets. For each movement action, we first give the main intended direction, followed by a left-rotating noise option and a right-rotating noise option.}
    \label{table:grid-offsets}
    \begin{center}
    \begin{tabular}{|l|l|}
        \hline
        Action & Possible offsets\\
        \hline
        \hline
        finish & (0,0)\\
        \hline
        left & (-1,0), (-1,-1), (-1,1) \\
        \hline
        right & (1,0), (1,-1), (1,1) \\
        \hline
        up & (0,1), (-1,1), (1,1) \\
        \hline
        down &(0,-1), (1,-1), (-1,-1) \\
        \hline
        left-up & (-1,1), (-1,0), (0,1) \\
        \hline
        left-down & (-1,-1), (0,-1), (-1,0)\\
        \hline
        right-up & (1,1), (0,1), (1,0) \\
        \hline
        right-down & (1,-1), (1,0), (0,-1) \\
        \hline
    \end{tabular}
    \end{center}    
\end{table}

%------------------------------------
\subsection{Quantiles}
We compute the quantiles with the statistics package R.
For completeness, we recall here the definition of quantiles that we have used in our analysis; this definition is called Definition 1 by \citet{hyndman_1996}. 

Let $L$ be a nonempty list of numbers, possibly containing duplicates. Let $\len L$ denote the length of $L$. For an index $i\in\set{1,\ldots,\len L}$, we write $\elemat Li$ to denote the number at index $i$ in $L$.
We write $\order L$ to denote the ordered version of $L$, where the numbers are sorted in ascending fashion; we keep duplicates, so $\len{\order L}=\len L$.

Let $p\in\real$ with $0 < p < 1$. 
The $p$-quantile of a nonempty list $L$ of numbers, denoted $\qnt pL$ is defined as follows: denoting $j=\floor{p\cdot n}$ where $n=\len L$,
\[
    \qnt pL = 
        \begin{cases}
           \elemat{\order L}{j+1} & \text{if }p\cdot n - j > 0\\
           \elemat{\order L}{j} & \text{if }p\cdot n - j = 0.
        \end{cases}
\]

Intuitively, using the index $j=\floor{p\cdot n}$ in the ordered list $\order L$ is a good attempt at finding a number $v$ such that a fraction $p$ of $L$ is smaller than or equal to $v$. 
The assumptions $0 < p$ and $p < 1$ ensure that we only apply valid indexes in the range $\set{1,\ldots,\len L}$ to the list $\order L$.

%------------------------------------
\subsection{Implementation Notes}
The simulation was written with Java Development Kit 8.
In our experimental results, we have measured only the number of transitions and the number trials in runs. Since no wall-clock time was needed, the exact running time of the simulation in seconds was not measured.
During every transition, we have used a uniform sampling from the possible successor states. 
Concretely, given an array of successor states, we have used the function \java{Math.random()} to generate random indexes in this array, as follows:
    the double precision number returned by \java{Math.random()} can be converted into an integer by multiplying with the array length and subsequently truncating the resulting number with the \java{Math.floor()} function.

\end{document}